\documentclass[letterpaper, 10 pt, conference]{ieeeconf}  

\IEEEoverridecommandlockouts                              

\usepackage{graphics} 
\usepackage{epsfig} 
\usepackage{times} 
\usepackage{amsmath} 
\usepackage{amssymb}  
\usepackage{textcomp}
\usepackage{stfloats}
\usepackage{url}
\usepackage{verbatim}
\usepackage{graphicx}
\usepackage{mathtools} 
\usepackage{hyperref}
\usepackage{bbm}
\usepackage{bm}
\usepackage{color}
\usepackage{algorithm}
\usepackage{algpseudocode}
\usepackage{balance}
\usepackage{booktabs}
\usepackage{placeins} 
\usepackage{siunitx}
\algnewcommand\algorithmicinput{\textbf{\quad Input:}}
\algnewcommand\INPUT{\item[\algorithmicinput]}
\algnewcommand\algorithmicfind{\qquad Find}
\algnewcommand\FIND{\item[\algorithmicfind]}

\DeclareMathOperator*{\argmin}{arg\,min}

\newtheorem{theorem}{Theorem}

\newtheorem{assumption}{Assumption}
\newtheorem{remark}[theorem]{Remark}

\title{\LARGE \bf
RK-MPC: Residual Koopman Model Predictive Control for Quadruped Locomotion in Offroad Environments
}

\author{Sriram S. K. S. Narayanan and Umesh Vaidya
\thanks{Financial support from NSF CMMI Award 2531804 is greatly acknowledged. Sriram S. K. S. Narayanan and Umesh Vaidya are with the Department of Mechanical Engineering, Clemson University, Clemson, SC 29630, USA
        {\tt\small email:sriramk@clemson.edu; uvaidya@clemson.edu}}%
}

\begin{document}

\maketitle
\thispagestyle{empty}
\pagestyle{empty}

\begin{abstract}
This paper presents Residual Koopman MPC (RK-MPC), a Koopman-based, data-driven model predictive control framework for quadruped locomotion that improves prediction fidelity while preserving real-time tractability. RK-MPC augments a nominal template model with a compact linear residual predictor learned from data in lifted coordinates, enabling systematic correction of model mismatch induced by contact variability and terrain disturbances with provable bounds on multi-step prediction error. The learned residual model is embedded within a convex quadratic-program MPC formulation, yielding a receding-horizon controller that runs onboard at 500\,Hz and retains the structure and constraint-handling advantages of optimization-based control. We evaluate RK-MPC in both Gazebo simulation and Unitree Go1 hardware experiments, demonstrating reliable blind locomotion across contact disturbances, multiple gait schedules, and challenging off-road terrains including grass, gravel, snow, and ice. We further compare against Koopman/EDMD baselines using alternative observable dictionaries, including monomial and $SE(3)$-structured bases, and show that the residual correction improves multi-step prediction and closed-loop performance while reducing sensitivity to the choice of observables. Overall, RK-MPC provides a practical, hardware-validated pathway for data-driven predictive control of quadrupeds in unstructured environments. See \url{https://sriram-2502.github.io/rk-mpc/} for implementation videos.
\end{abstract}

\section{INTRODUCTION}
\subsection{Motivation}
Quadruped robots operating in off-road and unstructured environments must navigate terrain-induced disturbances, contact uncertainty, and modeling errors while satisfying tight real-time constraints \cite{ha2025learning}. Classical physics-based MPC can enforce constraints and provide reliable receding-horizon behavior, but its performance depends on prediction fidelity, which is difficult to maintain across changing footholds, compliance, and unmodeled terrain interactions \cite{wensing2023optimization}. Purely policy-based learning can produce agile behaviors, yet it often requires extensive data and tuning and does not naturally provide predictive structure for constrained navigation \cite{kober2013reinforcement}. These challenges motivate a data-driven, model-centric approach in which predictive models are learned directly from locomotion data and embedded within MPC to preserve optimization-based constraint handling \cite{hewing2020learning}. In this work, we focus on Koopman-based predictors as a structured learning mechanism, enabling the construction of linear predictive models from data that can be directly integrated within MPC frameworks to improve robustness and real-time performance in off-road, unstructured environments.

\subsection{Related works}

Recent work on quadruped locomotion spans a spectrum from physics-based optimization methods to fully data-driven learning approaches. Classical model-based methods formulate locomotion as a trajectory optimization or model predictive control (MPC) problem using reduced-order dynamics, enabling dynamic and reliable behaviors on hardware \cite{winkler2018gait,di2018dynamic}. These approaches leverage structured dynamics and convex formulations to enforce contact and friction constraints, but their performance depends on the fidelity of the underlying model and can degrade under unmodeled terrain interactions, compliance, and disturbances encountered in off-road environments.

To address these limitations, learning-based MPC methods augment nominal physics models with learned components (e.g., residual dynamics or disturbance models), improving prediction accuracy while retaining an optimization-based mechanism for constraint handling \cite{aswani2013provably,deisenroth2013gaussian}. However, for high-dimensional, contact-rich systems such as quadrupeds, these methods can still face practical bottlenecks: collecting sufficiently informative data across gaits, impacts, and terrains is costly; learned corrections can be brittle under distribution shift; and uncertainty-aware variants (e.g., GP-based MPC) introduce additional computational overhead that is difficult to scale in real time \cite{hewing2020learning, hewing2019cautious}.

In parallel, deep reinforcement learning has demonstrated impressive agility and sim-to-real transfer for quadruped locomotion, particularly in unstructured and off-road environments \cite{tan2018sim,hwangbo2019learning,miki2022learning}. These approaches leverage large-scale simulation, domain randomization, and policy learning to achieve robust blind locomotion over challenging terrain. However, such methods are typically highly sample-inefficient, requiring extensive data collection and careful reward design; moreover, learned policies do not inherently provide predictive structure for constrained control or explicit safety guarantees, and can remain sensitive to unmodeled hardware effects and contact variability \cite{ha2025learning,kumar2022adapting}.

\textit{Role of Koopman Operator in structured learning and control:}
Koopman operator theory provides a principled operator-theoretic framework for analyzing nonlinear dynamical systems through linear evolution in an infinite-dimensional function space \cite{koopman1931hamiltonian,mezic2005spectral,mezic2020spectrum,otto2021koopman}. The spectral properties of the Koopman operator reveal intrinsic geometric structures of the state space, including invariant manifolds characterized as level sets of eigenfunctions \cite{budivsic2012applied,mauroy2016global}. This connection between spectral analysis and state-space geometry has enabled a broad class of data-driven methods for approximating Koopman eigenfunctions and constructing lifted representations of nonlinear dynamics. In practice, these lifted representations enable approximating nonlinear systems with linear predictors in the space of observables, enabling the use of linear system identification techniques and control design tools. In particular, Koopman-based predictors have been incorporated into data-driven control pipelines for system identification, state estimation, and model predictive control, where the lifted linear dynamics facilitate tractable optimization and constraint handling \cite{korda2018linear,korda2020optimal}.

Koopman-based methods have also been applied for feedback stabilization and learning structured representations tailored for control synthesis, including approaches based on deep neural network parameterizations \cite{huang2018feedback,han2020deep}. More recent advances have further extended Koopman operator theory to nonlinear control settings, where operator-theoretic formulations enable the construction of control-relevant representations of dynamical systems. In particular, connections between Koopman spectral analysis and nonlinear optimal control have been established through links to the Hamilton–Jacobi equation, demonstrating that solutions to optimal control problems can be approximated using Koopman eigenfunctions within convex optimization frameworks \cite{vaidya2025koopman,vaidya2023data,huang2022convex}. These developments provide a principled pathway for extending linear systems tools to nonlinear dynamics and motivate the use of Koopman-based representations for scalable control design. These ideas have been successfully applied across a range of robotic systems \cite{abraham2019active,shi2026koopman}, including soft manipulators \cite{bruder2020data,haggerty2023control, bruder2025koopman}, autonomous ground vehicles \cite{xiao2022deep,fu2025residual,joglekar2026modeling}, agile quadrotors \cite{narayanan2023se,rajkumar2025real}, and, more recently, legged platforms \cite{krolicki2022modeling,yang2025koopman}, highlighting the growing potential of Koopman-based representations for real-time control in complex, high-dimensional robotic systems.


\textit{Challenges in Extending Koopman Operator methods to quadruped locomotion:} Extending Koopman-based methods to rigid-body legged locomotion remains challenging due to the high-dimensional, contact-rich hybrid dynamics of quadrupeds, where switching contacts, impacts, and terrain variability degrade multi-step predictive accuracy. Recent work has incorporated Koopman predictors into legged systems by learning lifted linear models from locomotion data and embedding them in estimation and predictive control pipelines \cite{yang2025koopman,khorshidi2024centroidal,li2025continual}. Geometry-aware observable dictionaries that encode $SE(3)$ structure improve physical consistency over generic liftings, but do not guarantee preservation of rotation constraints, leading to $SO(3)$ drift and error accumulation over the prediction horizon \cite{narayanan2023se,rajkumar2025real}. Deep lifting approaches can improve one-step accuracy, but require large datasets, introduce significant training overhead, and remain sensitive to distribution shifts across contact conditions and terrains \cite{khorshidi2024centroidal}. Continual lifting strategies mitigate dictionary design issues, but are typically data-intensive and primarily validated in simulation, leaving challenges for reliable deployment on hardware \cite{li2025continual}.

These limitations motivate Koopman predictors for quadruped rigid-body dynamics that (i) retain a compact and computationally light lifted representation suitable for real-time MPC, (ii) reduce sensitivity to observable design while maintaining geometric consistency under multi-step rollouts, and (iii) remain reliable under terrain disturbances and contact variability in unstructured, off-road hardware deployments.

\subsection{Main Contributions}
This paper presents Residual Koopman MPC (RK-MPC), a Koopman-based, data-driven MPC framework with a learned linear residual correction for quadruped locomotion.
Our main contributions are:
\begin{enumerate}
    \item \textit{Residual Koopman modeling:} a Koopman-based data-driven framework that learns a compact linear residual correction on top of a nominal template model, along with formal guarantees on multi-step prediction error through bounded residual dynamics.
    \item \textit{Convex RK-MPC formulation:} a MPC formulation that embeds the learned residual predictor within a receding-horizon controller and runs onboard in real time at 500\,Hz.
    \item \textit{Validation in simulation and hardware:} extensive Gazebo simulations and Unitree Go1 experiments demonstrating reliable tracking of planar velocity commands and robust blind locomotion across contact disturbances, multiple gait schedules, and off-road terrains (grass, gravel, snow, and ice).
\end{enumerate}

\section{PRELIMINARIES}
\subsection{Notation}

For $n\in\mathbb{N}$, $\mathbb{R}^n$ denotes the $n$-dimensional real vector
space. For $m,n\in\mathbb{N}$, $\mathbb{R}^{m\times n}$ denotes the set of real
$m\times n$ matrices, and $\mathbb{I}_n$ denotes the $n\times n$ identity matrix. We denote the state by $x\in\mathcal{M}\subseteq\mathbb{R}^n$ and the control input by $u\in\mathcal{U}\subseteq\mathbb{R}^m$. 

The cross product of two vectors $a,b \in \mathbb{R}^3$ is written as
$a \times b = a^\wedge b$, where $(\cdot)^\wedge : \mathbb{R}^3 \rightarrow
\mathfrak{so}(3)$ denotes the \emph{hat operator}. For
$a = [\,a_1\; a_2\; a_3\,]^\top$, the hat map produces the skew-symmetric matrix
\[
a^\wedge =
\begin{bmatrix}
0 & -a_3 & a_2 \\
a_3 & 0 & -a_1 \\
-a_2 & a_1 & 0
\end{bmatrix}.
\]
The inverse mapping $(\cdot)^\vee : \mathfrak{so}(3) \rightarrow \mathbb{R}^3$
denotes the \emph{vee operator}. The Lie group of rotation matrices is denoted
$SO(3)$, with associated Lie algebra $\mathfrak{so}(3)$.

For a matrix $A \in \mathbb{R}^{m \times n}$, $\mathrm{vec}(A)$ denotes its
column-wise vectorization, $\|\cdot\|_F$ denotes the Frobenius norm, and
$(\cdot)^\dagger$ denotes the Moore--Penrose pseudoinverse. We use $\Theta \in \mathbb{R}^3$ to denote the base (center-of-mass) orientation parameterized by roll--pitch--yaw (RPY) Euler angles, i.e., $\Theta:= [\theta_r\;\theta_p\;\theta_y]^\top$, where $\theta_r$, $\theta_p$ and $\theta_y$ denote the roll, pitch and yaw angle of the center of mass respectively.

\subsection{Single Rigid Body (SRB) Dynamics} \label{sec:srbd}
\begin{figure}
    \centering
    \includegraphics[width=0.75\linewidth]{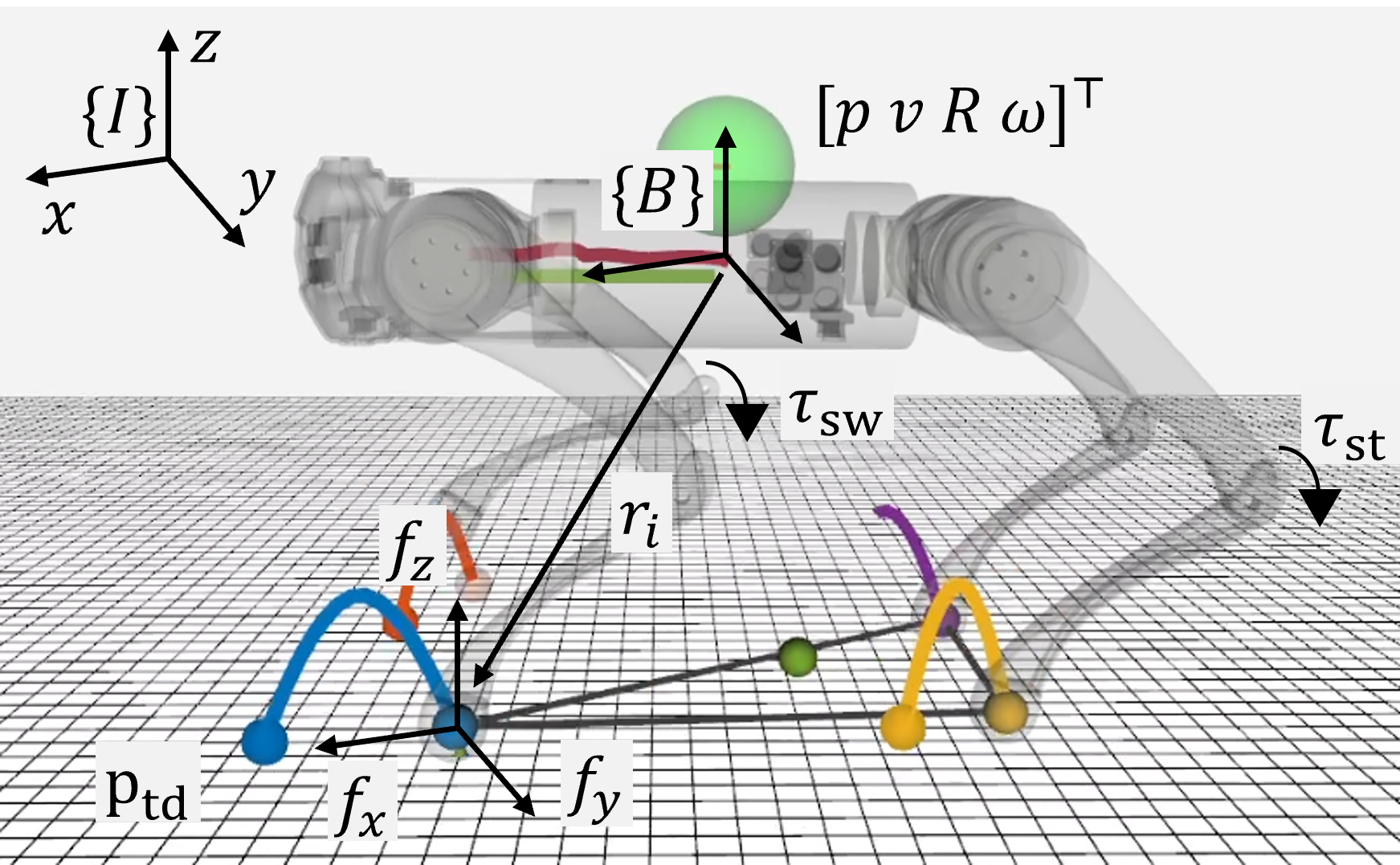}
    \caption{\textbf{Single rigid-body (SRB) model.}
The floating base evolves under stance ground reaction forces $f_i$ applied at the contact points with moment arms $r_i$, generating the net centroidal wrench that propels the robot forward.}

    \label{fig:rigid_body}
\end{figure}
Quadruped locomotion is modeled as a hybrid system switching between swing and
stance phases. During stance, the floating base is underactuated and its motion
is determined solely by the ground-reaction forces applied at the
feet contact. Each foot $i$ applies a force $f_i \in \mathbb{R}^3$ at a location $r_i$
relative to the CoM (for $i = 1, \hdots, n_c$, where $n_c$ is the number of feet in contact). The resulting net wrench can be obtained using the transformation matrix $H$ as follows
\begin{equation}
\begin{bmatrix} F \\[1mm] \tau \end{bmatrix}
=
\underbrace{
\begin{bmatrix}
\mathbb{I}_3 & \hdots & \mathbb{I}_3 \\
r_1^\wedge & \hdots & r_{n_c}^\wedge
\end{bmatrix}
}_{H}
\begin{bmatrix} f_1 \\ \vdots \\ f_{n_c} \end{bmatrix}, \label{eq:force_map}
\end{equation}
where $\mathbb{I}_3$ is the $3 \times 3$ identity matrix. The evolution of the floating-base can be described using the single rigid body
(SRB) dynamics model \cite{winkler18_phd,orin2013centroidal}:

\begin{subequations}\label{eq:srb_dynamics}
\begin{align}
    \dot{{p}} &= {v}, \\
    \dot{{v}} &= {a}_g + \frac{1}{m}{F}, \\
    \dot{{R}} &= {\omega}^\wedge {R}, \label{eq:nonlin_angles} \\
    \dot{{\omega}} &=
    {I}^{-1}\!\Big({\tau} - {\omega}^\wedge {I} {\omega}\Big), \label{eq:nonlin_ang_mom} \\ 
    {I} &:= {R} {I}_B {R}^\top
\end{align}
\end{subequations}
where $p \in \mathbb{R}^3$ and $v \in \mathbb{R}^3$ denote the CoM position and
velocity in the inertial frame $\mathcal{I}$, $R \in SO(3)$ is the body
orientation, and $\omega \in \mathbb{R}^3$ is the angular velocity expressed in
the body frame $\mathcal{B}$, $m$ is the body mass, ${a}_g=[0\;0\;-g]^\top$ is gravity expressed in
the inertial frame, ${I}\in\mathbb{R}^{3\times3}$ is the Inertia expressed in the inertial frame and ${I}_B\in\mathbb{R}^{3\times3}$ is the inertia
matrix expressed in the body frame (see Fig. \ref{fig:rigid_body}).

\begin{table}[t]
\centering
\caption{SRB physical parameters for Unitree Go1}
\label{tab:srb_params}
\footnotesize
\setlength{\tabcolsep}{2.6pt}
\renewcommand{\arraystretch}{0.90}
\begin{tabular}{lc}
\toprule
Parameter & Value \\
\midrule
Mass $m$ [kg] & 12.75 \\
Inertia $I_B$ [kg\,m$^2$] &
$10^{-3}\!\times\!
\begin{bmatrix}
160 & 0.12 & -16\\
0.12 & 470 & -0.03\\
-16 & -0.03 & 520
\end{bmatrix}$ \\
\bottomrule
\end{tabular}
\end{table}

\subsection{Optimal Control Formulation}
The finite-horizon optimal control problem in~\eqref{eq:ocp_general} seeks a sequence of states and inputs 
$\{x_i,u_i\}_{i=0}^N$ that minimizes a cumulative performance index while satisfying the system dynamics and all admissible constraints. Such formulations are standard in model predictive control and optimal control of constrained dynamical systems
\cite{rawlings2017model}. In this paper, we consider the optimal control problem in the following form:
\begin{subequations}\label{eq:ocp_general}
\begin{align}
\min_{\{x_k,u_k\}_{k=0}^N} \quad 
& \ell_{N+1}(x_{N+1})
  + \sum_{k=0}^{N} \ell_i(x_k, u_k)
\label{eq:ocp_general_obj} \\[4pt]
\text{s.t.} \quad
x_{k+1} &= F(x_k, u_k), 
\label{eq:ocp_general_dyn} \\[4pt]
 u_k &\in \mathcal{U}_k,
\label{eq:ocp_general_u} \\[4pt]
 x_k &\in \mathcal{X}_k,
\label{eq:ocp_general_x} \\[4pt]
 x_0 &= x_{\mathrm{op}}.
\label{eq:ocp_general_init}
\end{align}
\end{subequations}
 
The cost function is composed of a terminal cost $\ell_{N+1}(x_{N+1})$ and stage costs $\ell_i(x_k,u_k)$, which together encode tracking objectives and control effort penalties. The discrete-time dynamics $x_{k+1} = F(x_k,u_k)$ enforce the evolution of the system across the horizon, where $F(\cdot)$ denotes the transition map. At each stage $k$, the control input must lie within the admissible set $\mathcal{U}_k$, and the state must remain within the feasible set $\mathcal{X}_k$. The OCP is initialized at the measured or estimated operating state $x_0 = x_{\mathrm{op}}$. 

The constraints in~\eqref{eq:ocp_general} take on a specific structure for legged locomotion. The system dynamics $F(x_k, u_k)$ in \eqref{eq:ocp_general_dyn} describe the discrete-time evolution of the floating-base state using rigid-body dynamics. The control inputs $u_i$ correspond to the ground reaction forces applied at the feet $f_i \in \mathbb{R}^3$, which generate linear and angular accelerations of the floating base. The control objective is to design these foot forces to track a desired CoM reference trajectory while satisfying friction and contact constraints. \cite{wensing2023optimization,di2018dynamic,winkler18_phd,focchi2017high}.

The admissible input set $\mathcal{U}_k$ in \eqref{eq:ocp_general_u} corresponds to the feasible ground reaction forces at each foot. In general, these forces must lie within the Coulomb friction cone, which imposes the nonlinear constraint
\[
\sqrt{f_{i,x}^2 + f_{i,y}^2} \,\le\, \mu f_{i,z}, 
\qquad f_{i,z} \ge 0,
\]
for each stance foot~$i$, where $\mu$ is the coefficient of friction. 

Finally, we do not impose additional state constraints beyond those required by the dynamics, and thus take $\mathcal{X}_i = \mathbb{R}^n$ in~\eqref{eq:ocp_general_x}. This will be explored in future works.

\begin{figure*}
  \centering
  \includegraphics[width=0.75\textwidth]{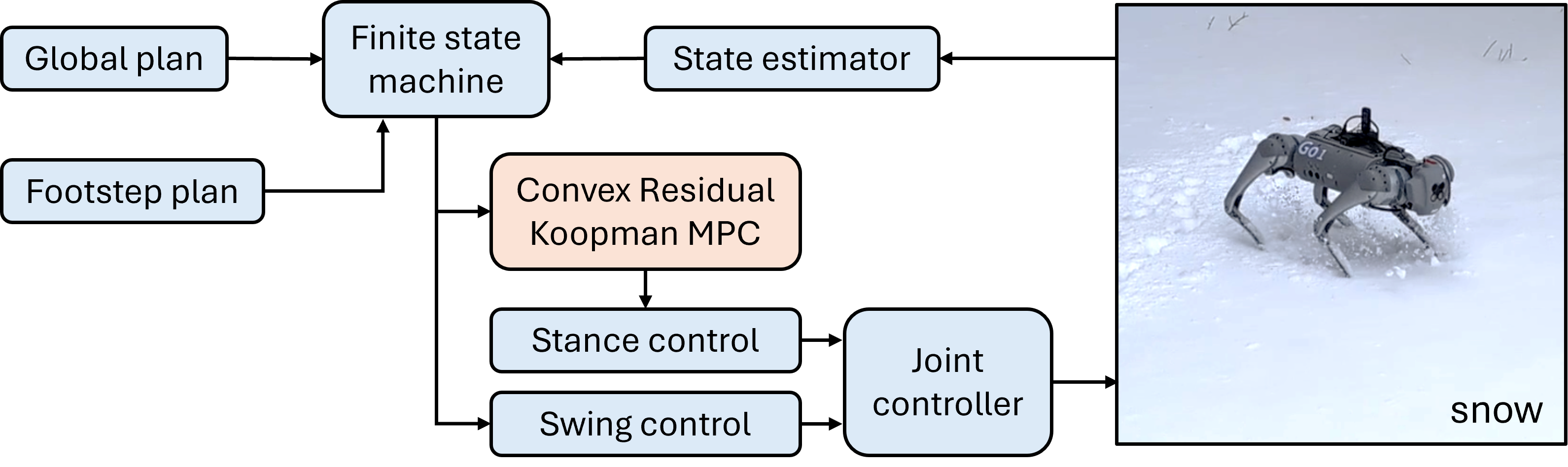}
  \caption{\textbf{RK-MPC locomotion stack.} A hierarchical pipeline converts high-level velocity commands into joint-level torque/position commands. A state estimator provides the feedback state to the finite-state machine and the convex residual Koopman MPC, which generates optimal ground reaction forces. The resulting commands are tracked by the joint controller and executed on hardware.}
  \label{fig:rk_mpc_stack}
\end{figure*}

\subsection{Hierarchical Control Architecture}

The quadruped control system is organized as a hierarchical architecture that
separates high-level motion planning from low-level actuation. This structure
enables real-time implementation while preserving modularity between planning,
optimization, and joint-level control \cite{di2018dynamic, winkler18_phd}. Note that this architecture is for blind locomotion, where no exteroceptive sensing is used (see Fig. \ref{fig:rk_mpc_stack}).

\textbf{{Finite State Machine.}}
A finite state machine (FSM) coordinates transitions between swing and stance
phases for each leg based on the gait schedule and contact state. The FSM
activates the appropriate controller for each leg and ensures consistent timing
between planning, optimization, and execution layers.

\textbf{Global Planner.}
A reduced-order model of the floating base is used to generate reference trajectories for the center of mass and base orientation. These references are
specified in terms of desired linear and angular velocities and serve as inputs
to the lower-level controllers.

\textbf{Footstep Planner.}
Footstep planning produces touchdown targets for swing feet. In this work, we
use a kinematic heuristic that accounts for gait timing and commanded motion.
Let $\phi_i\in[0,1]$ denote the swing phase for leg $i\in\bar{C}$, where \(\bar{C}\) denotes the set of swing legs. Let
$T_{\mathrm{sw}}$ and $T_{\mathrm{st}}$ denote the swing and stance durations.
Let $(v_x,v_y)$ and $\omega_z$ denote the estimated planar body velocity and yaw rate, and let $(v_x^{\mathrm{cmd}},v_y^{\mathrm{cmd}},\omega_z^{\mathrm{cmd}})$
denote the commanded twist. The step offsets are defined as
\begin{align}
\Delta x_i &= v_x\!\left[(1-\phi_i)T_{\mathrm{sw}} + \tfrac{1}{2}T_{\mathrm{st}}\right]
          + k_x \left(v_x - v_x^{\mathrm{cmd}}\right), \\
\Delta y_i &= v_y\!\left[(1-\phi_i)T_{\mathrm{sw}} + \tfrac{1}{2}T_{\mathrm{st}}\right]
          + k_y \left(v_y - v_y^{\mathrm{cmd}}\right), \\
\Delta \psi_i &= \omega_z\!\left[(1-\phi_i)T_{\mathrm{sw}} + \tfrac{1}{2}T_{\mathrm{st}}\right]
          + k_\psi \left(\omega_z^{\mathrm{cmd}}-\omega_z\right).
\end{align}
The gains $(k_x,k_y,k_\psi)$ are feedback gains that regulate velocity and yaw tracking by adjusting foot placement. The touchdown location for leg $i$ is then given by
\begin{equation}
{p}^{(i)}_{\text{td}} =
{p}_{\text{body}} +
\begin{bmatrix}\Delta x_i\\ \Delta y_i\\ 0\end{bmatrix} +
\begin{bmatrix}
r_i\cos(\theta_y+\alpha_i+\Delta\psi_i)\\
r_i\sin(\theta_y+\alpha_i+\Delta\psi_i)\\
0
\end{bmatrix},
\label{eq:raibert_td}
\end{equation}
where $\theta_y$ denotes the base yaw angle, and $(r_i,\alpha_i)$ are the nominal hip-to-foot polar coordinates in the horizontal plane \cite{raibert1986legged}. The touchdown height is set to the nominal ground plane. In implementation, $(\Delta x_i,\Delta y_i)$ are saturated to satisfy kinematic limits (Table~\ref{tab:gait_params}), improving robustness.

\begin{table}[t]
\centering
\caption{Gait and footstep parameters.}
\label{tab:gait_params}
\footnotesize
\setlength{\tabcolsep}{2.6pt}
\renewcommand{\arraystretch}{0.90}
\begin{tabular}{lcc}
\toprule
Parameter & Trot & Crawl \\
\midrule
$T$ [s] & 0.45 & 1.10 \\
$\beta$ & 0.5 & 0.75 \\
$T_{\text{st}}$ [s] & 0.225 & 0.825 \\
$T_{\text{sw}}$ [s] & 0.225 & 0.275 \\
Freq. [Hz] & 2.22 & 0.91 \\
Phase bias (LF,LH,RF,RH) & [0,0.5,0.5,0] & [0,0.25,0.5,0.75] \\
$h$ [m] & 0.10 & 0.10 \\
$(k_x,k_y,k_\psi)$ & (0.005,0.005,0.005) & same \\
$(|\Delta x|,|\Delta y|)$ [m] & (0.10,0.10) & same \\
\bottomrule
\end{tabular}
\end{table}

\textbf{Swing-Leg Control.}
A smooth swing trajectory connects the current foot position $(x_s,y_s,z_s)$ to the touchdown target $(x_e,y_e,z_e)$ using the swing phase $\phi\in[0,1]$. We use a cycloidal horizontal profile and a sinusoidal vertical profile with swing height $h$:
\begin{align*}
x(\phi) &= x_s + (x_e-x_s)\frac{2\pi\phi-\sin(2\pi\phi)}{2\pi},\\
y(\phi) &= y_s + (y_e-y_s)\phi,\\
z(\phi) &= z_s + h\frac{1-\cos(2\pi\phi)}{2}.
\end{align*}
Let ${p}_{\textrm{foot}}^{(i)}$ and $\dot{{p}}_{\textrm{foot}}^{(i)}$ denote the measured swing-foot position and velocity, and ${p}_{\textrm{foot},\star}^{(i)}$, $\dot{{p}}_{\textrm{foot},\star}^{(i)}$ the corresponding swing references. The desired Cartesian swing force is
\begin{equation}
{F}^{(i)}_{\text{sw}} =
{K}^{\text{sw}}_p\!\left({p}_{\textrm{foot},\star}^{(i)}-{p}_{\textrm{foot}}^{(i)}\right) +
{K}^{\text{sw}}_d\!\left(\dot{{p}}_{\textrm{foot},\star}^{(i)}-\dot{{p}}_{\textrm{foot}}^{(i)}\right).
\end{equation}
and the joint torques are computed via the Jacobian transpose map
\begin{equation}
{\tau}^{(i)}_{\text{sw}} = {J}_i^\top {F}^{(i)}_{\text{sw}}.
\label{eq:tau_swing}
\end{equation}
Here, ${J}_i$ denotes the leg Jacobian consistent with the Cartesian force representation; ${F}^{(i)}_{\text{sw}}$ is expressed in the same frame as ${J}_i$. Joint-level PD gains are applied to stabilize tracking and ensure bounded torques (Table~\ref{tab:torque_params}).

\textbf{Stance-Leg Control.}
During stance, the high-level MPC computes the desired per-foot ground reaction forces ${f}_i$ for each stance foot. These forces are mapped to joint torques using
\begin{equation}
{\tau}^{(i)}_{\text{st}} = {J}_i^\top {f}_i .
\label{eq:tau_stance}
\end{equation}
The contact forces are expressed in the same frame as the Jacobian used in \eqref{eq:tau_stance}. Torque saturation is applied to respect actuator limits, and the finite-state machine switches between \eqref{eq:tau_swing} and \eqref{eq:tau_stance} according to the gait schedule and contact state.

\begin{table}[t]
\centering
\caption{Torque control parameters.}
\label{tab:torque_params}
\footnotesize
\setlength{\tabcolsep}{3.0pt}
\renewcommand{\arraystretch}{0.92}
\begin{tabular}{lll}
\toprule
Parameter name & Symbol & Value \\
\midrule
Swing Cartesian P gain & $K^{\mathrm{sw}}_{p}$ & $\mathrm{diag}(400,\,400,\,400)$ \\
Swing Cartesian D gain  & $K^{\mathrm{sw}}_{d}$ & $\mathrm{diag}(10,\,10,\,10)$ \\
Joint PD gains (swing)  & $(k_p^{\mathrm{sw}},\,k_d^{\mathrm{sw}})$ & $(3,\,2)$ \\
Joint PD gains (stance) & $(k_p^{\mathrm{st}},\,k_d^{\mathrm{st}})$ & $(0.8,\,0.8)$ \\
Torque saturation limit [Nm] & $\tau_{\max}$ & $50$ \\
\bottomrule
\end{tabular}
\end{table}

\textbf{State Estimation.}
A state estimator provides the floating-base SRB state used throughout the
hierarchy. At each control cycle, onboard proprioceptive measurements (IMU,
leg kinematics, and contact signals from force sensors on each foot) are used to estimate the base pose and body
twist. Unless otherwise stated, all estimated quantities are expressed in the
inertial/world frame $\mathcal{I}$. This estimate is used by the global planner
and Raibert footstep heuristic to construct reference trajectories and touchdown
locations, and in the MPC.

\subsection{Koopman Operator Theory}
\label{sec:koopman_prelims}
Koopman operator theory provides a linear (but generally infinite-dimensional)
representation of nonlinear dynamical systems by studying the evolution of
observable functions along trajectories~\cite{mezic2005spectral,mezic2013analysis}. Consider the discrete-time dynamical system
\begin{equation}
x_{k+1}=T(x_k), \qquad x_k\in\mathcal{M},
\label{eq:autonomous_dt}
\end{equation}
where $\mathcal{M}\subseteq\mathbb{R}^n$ is the state space and
$T:\mathcal{M}\to\mathcal{M}$ is a (possibly nonlinear) state-transition map.
Let $\mathcal{H}_{\mathcal{M}}$ denote a linear space of real-valued observables
$\phi:\mathcal{M}\to\mathbb{R}$. The Koopman operator
$\mathbb{U}:\mathcal{H}_{\mathcal{M}}\to\mathcal{H}_{\mathcal{M}}$ associated
with \eqref{eq:autonomous_dt} is defined by
\begin{equation}
(\mathbb{U}\phi)(x_k) := \phi\!\left(T(x_k)\right) = \phi(x_{k+1}).
\end{equation}
The Koopman operator can be extended to controlled systems by augmenting the
state with the full input sequence~\cite{korda2018linear,williams2016extending,proctor2016dynamic}.
Consider the controlled discrete-time system
\begin{equation}
x_{k+1}=T(x_k,u_k), \qquad x_k\in\mathcal{M},\ \ u_k\in\mathcal{U}.
\label{eq:controlled_dt}
\end{equation}
Let $\ell(\mathcal{U}) := \{(u_i)_{i=0}^{\infty}\mid u_i\in\mathcal{U}\}$ denote
the space of input sequences and define the extended state space
$\mathcal{S}:=\mathcal{M}\times \ell(\mathcal{U})$. Let $\mathcal{H}_{\mathcal{S}}$
denote a linear space of real-valued observables $\Phi:\mathcal{S}\to\mathbb{R}$.
The associated Koopman operator
$\mathcal{K}:\mathcal{H}_{\mathcal{S}}\to\mathcal{H}_{\mathcal{S}}$ is defined by
\begin{equation}
(\mathcal{K}\Phi)\!\left(x_k,(u_i)_{i=k}^{\infty}\right)
:=
\Phi\!\left(x_{k+1},(u_i)_{i=k+1}^{\infty}\right),
\end{equation}
where $x_{k+1}=T(x_k,u_k)$ and $(u_i)_{i=k+1}^{\infty}\in\ell(\mathcal{U})$ denotes
the shifted input sequence. In the following section, we describe how to obtain
a finite-dimensional approximation of $\mathcal{K}$ using EDMD with control.

\subsection{EDMD with Control}
\label{sec:EDMD}

We use EDMD with control (EDMDc) to obtain a finite-dimensional approximation of
the controlled Koopman operator $\mathcal{K}$ from
Section~\ref{sec:koopman_prelims} by projecting onto a subspace spanned by a
dictionary of observables~\cite{korda2018linear,proctor2016dynamic}. Let
$\{\psi_i\}_{i=1}^{q}$ be scalar observables $\psi_i:\mathcal{M}\to\mathbb{R}$ and
define the lifted (vector-valued) observable
\[
\psi(x_k) := \begin{bmatrix}\psi_1(x_k) & \cdots & \psi_q(x_k)\end{bmatrix}^\top \in \mathbb{R}^{q}.
\]
In particular, $K$ is parameterized by a pair of matrices $A\in\mathbb{R}^{q\times q}$ and $B\in\mathbb{R}^{q\times m}$, i.e., $K := \begin{bmatrix} A & B \end{bmatrix}$ which define a linear predictor in the lifted space.
\begin{equation}
\psi(x_{k+1}) \approx A\,\psi(x_k) + B\,u_k .
\label{eq:edmdc_lifted_dyn}
\end{equation}
Given snapshot data $\{(x_k,u_k,x_{k+1})\}_{k=0}^{M-1}$, we first form the lifted snapshot
matrices
\begin{align*}
Z   &:= \begin{bmatrix}\psi(x_0) & \cdots & \psi(x_{M-1})\end{bmatrix},\\
Z'  &:= \begin{bmatrix}\psi(x_1) & \cdots & \psi(x_{M})\end{bmatrix},\\
U   &:= \begin{bmatrix}u_0 & \cdots & u_{M-1}\end{bmatrix}.
\end{align*}
and the regressors as $\Omega := \begin{bmatrix} Z \\ U \end{bmatrix}$. We
estimate $K$ via ridge-regularized least squares,
\begin{equation}
K
=\argmin_{K}\ \|Z'- K\,\Omega\|_F^2+\lambda\|K\|_F^2,
\label{eq:edmdc_ridge}
\end{equation}
with regularization weight $\lambda>0$. This admits the closed-form solution
\begin{equation}
K = Z'\,\Omega^\top\big(\Omega\,\Omega^\top+\lambda \mathbb{I}\big)^{-1},
\label{eq:edmdc_ridge_solution}
\end{equation}
where $\mathbb{I}$ is the identity matrix of appropriate dimension. The resulting predictor~\eqref{eq:edmdc_lifted_dyn} yields a linear lifted model
amenable to fast, optimization-based control (e.g., linear MPC) and has been
successfully used in several robotics settings \cite{shi2026koopman,abraham2019active}. However, the practical performance of EDMDc hinges on the choice of observables, which remains problem-dependent and nontrivial. The following section introduces our residual Koopman framework to reduce this burden while retaining a control-oriented linear structure.


\section{Residual Koopman Modeling Framework}
\label{sec:residual_koopman}

\begin{figure*}
    \centering
    \includegraphics[width=0.8\linewidth]{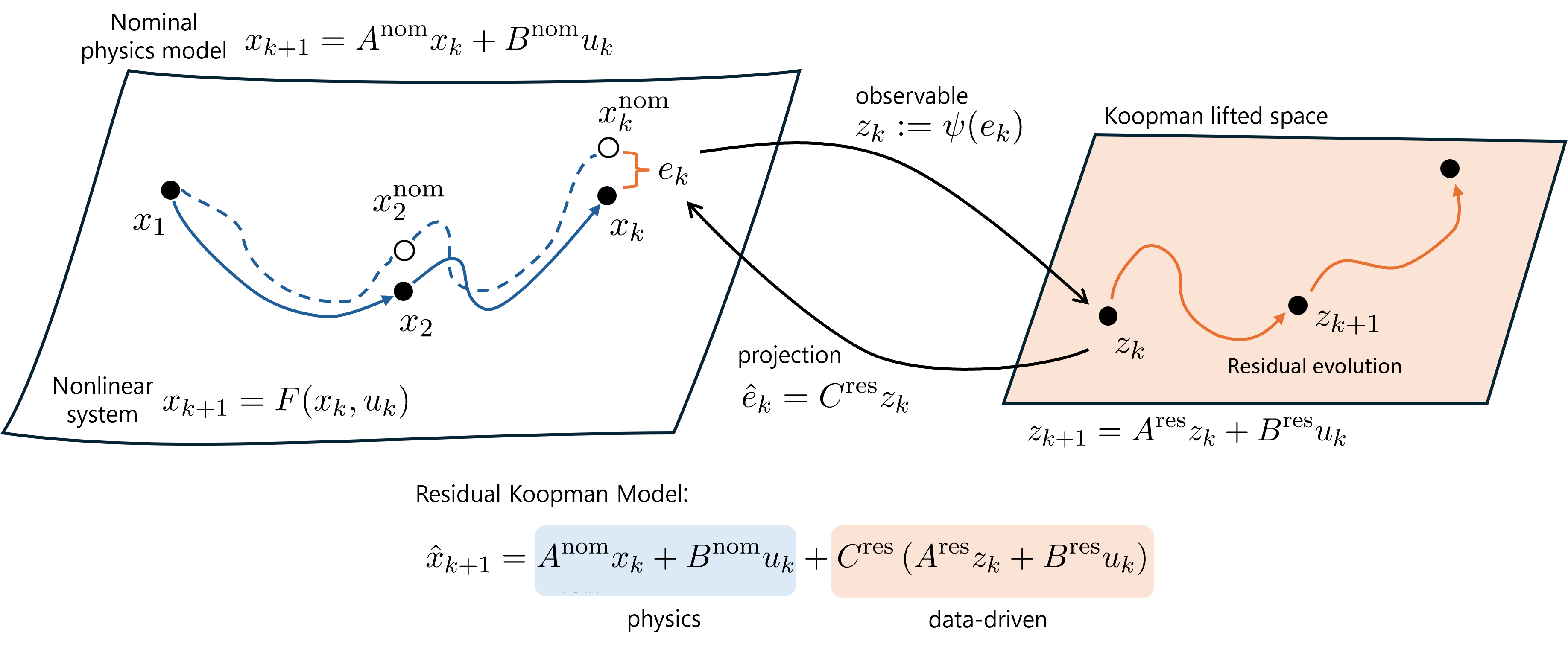}
    \caption{Residual Koopman modeling framework}
    \label{fig:residual_model}
\end{figure*}


This section presents a residual Koopman modeling framework for augmenting a
nominal dynamics model with compact, data-driven corrections while preserving a
linear predictor structure suitable for MPC. The key idea is to (i) begin with a
nominal model that captures the dominant template dynamics, and (ii) learn a
Koopman based linear residual model to account for systematic mismatch
arising from unmodeled contact effects, terrain/friction variation, actuator
dynamics, and linearization errors (see Fig. \ref{fig:residual_model}).

\subsection{Nominal Dynamics Model}
\label{subsec:nominal_model}

Let $x_k, \,x_k^{\mathrm{nom}} \in \mathbb{R}^n$ denote the system state and the nominal predicted state, respectively, and $u_k \in \mathbb{R}^m$ the
control input at time step $k$. We assume that a nominal predictor $F^{\mathrm{nom}}(x_k,u_k)$ is available in discrete time,
\begin{equation}
\label{eq:nominal_general}
x_{k+1}^{\mathrm{nom}} = F^{\mathrm{nom}}(x_k,u_k).
\end{equation}
This predictor captures the primary structure of the true dynamics but may be inaccurate outside a reference regime. The nominal model \eqref{eq:nominal_general} can be obtained in multiple ways. For instance, it may be derived from first-principles physics and subsequently embedded into a Koopman-linear representation through an analytic generator-based construction \cite{bruder2020data, bruder2025koopman}.

In this work, we instantiate $F^{\mathrm{nom}}$ using a discrete linear time-varying
SRB model parameterized by the yaw angle and nominal stance foot locations. We consider the SRB state vector (expressed in the inertial frame)
\begin{equation}
\label{eqn:original_states}
x := [\,p^\top\; \Theta^\top\; v^\top\; \omega^\top\; 1\,]^\top \nonumber
\in \mathbb{R}^{13},
\end{equation}
where $\Theta\in\mathbb{R}^3$ denotes the base orientation (using Euler angles), and the appended constant state enables contributions (e.g., gravity) to be represented within a linear state-space form without an explicit affine offset. The linearized SRB dynamics are given by:
\begin{equation}
\label{eq:nominal_ltv}
x_{k+1}^{\mathrm{nom}}
=
A^{\mathrm{nom}}\, x_k
+
B^{\mathrm{nom}}\, u_k,
\end{equation}
where $A^{\mathrm{nom}} := A(\theta_k)$, $B^{\mathrm{nom}} := B(\theta_k,\bar r_k)$, $\theta_k$ denotes the yaw angle of the base, and $\bar r_k := [r_{1,k}^\top\ \cdots\ r_{n,k}^\top]^\top$ collects the nominal contact locations (moment arms) of the $n$ feet in contact at step $k$. 


\begin{remark}
Alternatively, the nominal predictor can be constructed using an analytical Koopman lifting based on an ${SE}(3)$-consistent basis, where rigid-body kinematics and dynamics are embedded into a linear representation in a higher-dimensional space \cite{narayanan2023se, chen2022koopman}. Such constructions provide a structured and physically consistent nominal model for quadruped locomotion, and can be used in place of the SRB-based linearization without loss of generality.
\end{remark}

The linearized SRB template has been shown to perform competitively with linear predictors
identified directly from locomotion data and therefore
forms a strong starting point for our nominal model \cite{fawcett2022toward}. Despite its practicality, the SRB linearization has well-known limitations. It is most accurate for near-planar locomotion with relatively small roll/pitch
excursions and moderate angular rates \cite{winkler18_phd}. In particular, the rotational kinematics
and dynamics are simplified to obtain a linear (yaw-scheduled) template model.
First, assuming small roll/pitch angles, the nonlinear Euler-angle kinematics are
approximated by the yaw-only mapping
\begin{equation}
\label{eq:euler_rate_approx}
\begin{bmatrix}\dot\phi\\ \dot\theta\\ \dot\psi\end{bmatrix} \nonumber
\approx R_z^\top(\psi)\,\omega,
\end{equation}
where $R_z(\psi)$ denotes the rotation matrix about the $z$-axis. This
approximation becomes inaccurate when roll/pitch excursions are large (e.g.,
highly uneven terrain or aggressive body reorientation), since the exact mapping
between $\dot{\phi},\dot{\theta},\dot{\psi}$ and $\omega$ is nonlinear  \eqref{eq:nonlin_angles}.

Second, writing the body-frame angular momentum as $L = I\omega$, we neglect the
gyroscopic coupling in the momentum rate,
\begin{equation}
\label{eq:angmom_approx}
\frac{d}{dt}(I\omega) = I\dot\omega + \omega \times (I\omega) 
\;\approx\; I\dot\omega, \nonumber
\end{equation}
thereby removing the quadratic coupling between angular velocity and angular
momentum \eqref{eq:nonlin_ang_mom}. This simplification is reasonable for moderate angular rates but can
lead to systematic prediction error in regimes with large angular momentum
(e.g., aggressive maneuvers, highly uneven terrain), motivating the residual Koopman correction introduced next.

Finally, it is worth emphasizing that even when the underlying template dynamics
are modeled exactly (e.g., nonlinear centroidal or SRB equations), they remain a
reduced-order approximation of the full quadruped dynamics and contact
interaction. As a consequence, systematic discrepancies can persist due to
neglected joint dynamics and terrain-dependent effects.
A residual Koopman augmentation is therefore valuable even in conjunction with any nominal model: by learning correction terms directly from measured state
transitions, the predictor can compensate for repeatable modeling error while
preserving an MPC-compatible linear structure in the lifted space. In this way,
the residual operator improves multi-step prediction fidelity relative to a
baseline reduced-order model, which in turn can enhance tracking performance and
robustness of the overall hierarchical locomotion controller.

\subsection{Residual Koopman Representation}
\label{subsec:residual_koopman_rep}

Let $x_{k}$ denote the measured state from the true system (or a
high-fidelity simulator). The one-step modeling error relative to the nominal predictor is defined as
\begin{equation}
\label{eq:residual_error}
e_k := x_{k} - x_{k}^{\mathrm{nom}}.
\end{equation}
Rather than replacing the nominal model, our goal is to learn a compact
predictor for $e_k$ that corrects systematic mismatch while retaining the nominal structure and interpretability. In practice, we restrict the residual to centroidal twist: $e_k := [\Delta v_k^\top \; \Delta \omega_k^\top]^\top$.

Motivated by Koopman operator theory, we assume the residual admits an
approximately linear evolution in a lifted observable space. Let
\[
z_k := \psi(e_k) \in \mathbb{R}^{q}
\]
denote a vector of residual observables (lifting functions). We model the
residual through the linear system

\begin{equation}
\label{eq:residual_lifted}
z_{k+1} = A^{\mathrm{res}} z_k + B^{\mathrm{res}} u_k,
\qquad
\hat e_k = C^{\mathrm{res}} z_k,
\end{equation}
where $A^{\mathrm{res}} \in \mathbb{R}^{q\times q}$, $B^{\mathrm{res}} \in \mathbb{R}^{q\times m}$, and
$C^{\mathrm{res}} \in \mathbb{R}^{n\times q}$ are learned from data. Combining
\eqref{eq:nominal_ltv}--\eqref{eq:residual_lifted} yields the corrected predictor
\begin{align}
\label{eq:corrected_predictor}
\hat x_{k+1}
&=
x_{k+1}^{\mathrm{nom}} + \hat e_{k+1}
=
A^{\mathrm{nom}} x_k
+
B^{\mathrm{nom}} u_k
+
C^{\mathrm{res}} z_{k+1} \nonumber \\
&=A^{\mathrm{nom}} x_k
+
B^{\mathrm{nom}} u_k
+
C^{\mathrm{res}} \left( A^{\mathrm{res}} z_k +  B^{\mathrm{res}} u_k \right).
\end{align}
This decomposition separates the modeling task into a nominal component that
encodes the SRB template structure and a data-driven residual component that
captures unmodeled effects via a linear operator in lifted coordinates.

Given a dataset of transitions $\{(x_k,u_k)\}_{k=1}^{N}$, we compute
$x_{k}^{\mathrm{nom}}$ from \eqref{eq:nominal_ltv} and form residual targets
$e_k$ using \eqref{eq:residual_error}. The lifted residual snapshots are
constructed as $z_k=\psi(e_k)$. Then, we use EDMD to identify the predictors by minimizing the lifted prediction error 
\begin{equation}
\label{eq:residual_ls}
\min_{A^{\mathrm{res}},B^{\mathrm{res}}}\;
\sum_{k=1}^{N-1}
\left\|
z_{k+1} - A^{\mathrm{res}} z_k - B^{\mathrm{res}} u_k
\right\|_2^2,
\end{equation}
followed by a linear regression for the output map
\begin{equation}
\label{eq:residual_output_ls}
\min_{C^{\mathrm{res}}}\;
\sum_{k=1}^{N-1}
\left\|
e_z - C^{\mathrm{res}} z_k
\right\|_2^2.
\end{equation}

We solve \eqref{eq:residual_ls}--\eqref{eq:residual_output_ls} via ridge-regularized least squares as in Section~\ref{sec:EDMD}, using regularization weight $\lambda>0$ for numerical robustness. In the following, we apply this residual Koopman framework to quadruped locomotion and highlight practical considerations that arise when the predictor is embedded in a hierarchical control pipeline.

Next, we present Theorem \ref{thm:rk_error_propagation}, which provides formal guarantees on the error bounds obtained using the proposed residual Koopman model.
We first make the following assumptions:
\begin{assumption} \label{assumption1}
\begin{enumerate}
\item[]
\item[(A1)] The dynamics \(F\) are Lipschitz in the state uniformly in \(u\), i.e., there exists \(L>0\) such that
\begin{equation}
\|F(x,u)-F(y,u)\| \le L \|x-y\|,
\quad \forall x,y,\; \forall u . \nonumber
\end{equation}

\item[(A2)] The residual Koopman approximation error is uniformly bounded, i.e., there exists \(\varepsilon>0\) such that
\begin{equation}
\left\|
e_{k+1}
-
C^{\mathrm{res}}
\left(
A^{\mathrm{res}} z_k + B^{\mathrm{res}} u_k
\right)
\right\|
\le \varepsilon
\qquad \forall k. \nonumber
\end{equation}
\end{enumerate}
\end{assumption}

\begin{theorem}[]
\label{thm:rk_error_propagation}
Let the true discrete-time system evolve as 
\begin{equation}
    x_{k+1} = F(x_k,u_k). \nonumber
\end{equation}
Let the nominal predictor be given by \eqref{eq:nominal_ltv}
with the residual defined in \eqref{eq:residual_error} and the residual Koopman predictor defined in \eqref{eq:residual_lifted}. The combined state predictor is
\begin{equation}
\hat x_{k+1}
=
A^{\mathrm{nom}} x_k + B^{\mathrm{nom}} u_k
+
C^{\mathrm{res}}
\left(
A^{\mathrm{res}} z_k + B^{\mathrm{res}} u_k
\right), \nonumber
\end{equation}
with the state prediction error
\begin{equation}
\tilde x_k := x_k - \hat x_k. \nonumber
\end{equation}
Then, using Assumption \ref{assumption1}, there exists a constant \(\gamma>0\), depending only on
\(A^{\mathrm{res}},B^{\mathrm{res}},C^{\mathrm{res}}\) and \(\psi\), such that

\begin{equation}
\|\tilde x_{k+1}\|
\le
2L \|\tilde x_k\|
+
\varepsilon.
\end{equation}

Consequently, over any horizon \(N\ge 1\),

\begin{equation}
\|\tilde x_{k+N}\|
\le
(2L)^N \|\tilde x_k\|
+
\sum_{i=0}^{N-1}
L^{N-1-i}
\left(
\varepsilon
\right).
\end{equation}
\end{theorem}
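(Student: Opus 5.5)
The plan is to read the predictor in the statement literally: $\hat x_{k+1}$ is built from the \emph{measured} quantities $x_k$, $u_k$ and $z_k=\psi(e_k)$. Under that reading the one-step error collapses directly onto the quantity controlled by (A2). Both displayed inequalities then follow because their right-hand sides contain a nonnegative term plus $\varepsilon$. I will not need a genuine contraction or unrolling argument.

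\textbf{Step 1 (one-step identity).} By \eqref{eq:nominal_ltv}, $x^{\mathrm{nom}}_{k+1}=A^{\mathrm{nom}}x_k+B^{\mathrm{nom}}u_k$ is evaluated at the true state $x_k$. By \eqref{eq:residual_error}, $x_{k+1}=x^{\mathrm{nom}}_{k+1}+e_{k+1}$. Subtracting the combined predictor gives
\begin{equation}
\tilde x_{k+1}=x_{k+1}-\hat x_{k+1}=e_{k+1}-C^{\mathrm{res}}\bigl(A^{\mathrm{res}}z_k+B^{\mathrm{res}}u_k\bigr). \nonumber
\end{equation}
Assumption (A2) then yields $\|\tilde x_{k+1}\|\le\varepsilon$. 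Since $L>0$ and norms are nonnegative, $\varepsilon\le 2L\|\tilde x_k\|+\varepsilon$, which is the first claimed bound. The constant $\gamma$ plays no active role here: any $\gamma>0$ depending on $A^{\mathrm{res}},B^{\mathrm{res}},C^{\mathrm{res}},\psi$ suffices, because its existence is the only thing asserted. (A1) is likewise not needed for this step.

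\textbf{Step 2 (horizon bound).} Apply Step 1 at index $k+N-1$ to get $\|\tilde x_{k+N}\|\le\varepsilon$. The right-hand side of the horizon claim is $(2L)^N\|\tilde x_k\|+\sum_{i=0}^{N-1}L^{N-1-i}\varepsilon$. Every summand is nonnegative, and the $i=N-1$ term equals exactly $\varepsilon$. Hence the right-hand side is at least $\varepsilon$, and the stated inequality with the weights $L^{N-1-i}$ holds exactly as written for every $N\ge1$.

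\textbf{Main obstacle.} The delicate point is justifying the interpretation in Step 1: that $\hat x_{k+j}$ is formed from measured $x_{k+j-1}$ and $z_{k+j-1}=\psi(e_{k+j-1})$ at every step, as in the statement's definition of the combined predictor. It must not be a recursive rollout feeding $\hat x$ back in. If one instead insisted on a rollout, the natural route would be to split the error via (A1) and (A2) into a propagated part and a fresh $\varepsilon$ part. That route gives weights $(2L)^{N-1-i}$, and it does not reproduce the stated $L^{N-1-i}$ weights. I will therefore state this reading explicitly at the start of the proof and check it against \eqref{eq:corrected_predictor}, where $x_k$ and $z_k$ appear unhatted.
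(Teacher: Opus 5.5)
Your proof is correct. It also reaches the result by a more direct route than the paper, and it proves the horizon bound more faithfully than the paper does.

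\textbf{Reading of the predictor.} The paper uses the same reading you adopt. Its proof writes $A^{\mathrm{nom}}x_k$ with the measured $x_k$ and invokes $e_{k+1}=F(x_k,u_k)-(A^{\mathrm{nom}}x_k+B^{\mathrm{nom}}u_k)$. So the interpretive obstacle you flag is resolved in your favor.

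\textbf{One-step bound.} The paper does not compute $\tilde x_{k+1}$ exactly. It inserts $\pm F(\hat x_k,u_k)$ and bounds $\|F(x_k,u_k)-F(\hat x_k,u_k)\|\le L\|\tilde x_k\|$ via (A1). It then rewrites the remainder through $e_{k+1}$, inserting $\pm F(\hat x_k,u_k)$ a second time, and applies (A2) and (A1) again to reach $2L\|\tilde x_k\|+\varepsilon$. The two Lipschitz terms it bounds are $F(x_k,u_k)-F(\hat x_k,u_k)$ and its negative, so they cancel identically and the $2L\|\tilde x_k\|$ is pure slack. Your exact identity $\tilde x_{k+1}=e_{k+1}-C^{\mathrm{res}}(A^{\mathrm{res}}z_k+B^{\mathrm{res}}u_k)$ avoids (A1) entirely and gives the sharper $\|\tilde x_{k+1}\|\le\varepsilon$.

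\textbf{Horizon bound.} The paper obtains it by ``recursively applying the one-step bound.'' Unrolling $\|\tilde x_{j+1}\|\le 2L\|\tilde x_j\|+\varepsilon$ actually produces $\sum_{i=0}^{N-1}(2L)^{N-1-i}\varepsilon$. This strictly exceeds the stated $\sum_{i=0}^{N-1}L^{N-1-i}\varepsilon$ once $N\ge 2$. So the stated inequality does not follow from the paper's recursion as written; your remark about the $(2L)^{N-1-i}$ weights is exactly this discrepancy. Your argument does establish the inequality with the stated weights: $\|\tilde x_{k+N}\|\le\varepsilon$, and the $i=N-1$ summand equals $\varepsilon$.

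\textbf{What the paper's route buys.} Its decomposition contributes only a form: a propagated Lipschitz term plus fresh model error. That is the template one would need for a genuine open-loop rollout, in which $\hat x$ and the lifted residual are fed back. Under the statement's definition, however, that term does no work. A real rollout bound would also require Lipschitz-type control of $A^{\mathrm{nom}}$, $\psi$ and the residual matrices. That is presumably what the unused constant $\gamma$ was meant to carry, and neither argument supplies it.
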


\begin{proof}
By the definition of the true and predicted dynamics,
\begin{align}
\tilde x_{k+1}
&=
x_{k+1} - \hat x_{k+1} \nonumber \\
&=
F(x_k,u_k)
-
\Big(
A^{\mathrm{nom}} x_k + B^{\mathrm{nom}} u_k \nonumber \\
&\quad+
C^{\mathrm{res}}
\left(
A^{\mathrm{res}} z_k + B^{\mathrm{res}} u_k
\right)
\Big). \nonumber
\end{align}
Add and subtract \(F(\hat x_k,u_k)\) to obtain
\begin{align}
\tilde x_{k+1}
&=
\Big(F(x_k,u_k)-F(\hat x_k,u_k)\Big) \nonumber \\ 
&\quad+
\Big(
F(\hat x_k,u_k)
-
A^{\mathrm{nom}} x_k
-
B^{\mathrm{nom}} u_k \nonumber \\
&\quad-
C^{\mathrm{res}}(A^{\mathrm{res}} z_k + B^{\mathrm{res}} u_k)
\Big).
\end{align}
Taking norms and applying the triangle inequality gives
\begin{align}
\|\tilde x_{k+1}\|
&\le
\|F(x_k,u_k)-F(\hat x_k,u_k)\|
 \nonumber \\
&\quad+
\Big\|
F(\hat x_k,u_k)
-
A^{\mathrm{nom}} x_k
-
B^{\mathrm{nom}} u_k \nonumber \\
&\quad-
C^{\mathrm{res}}(A^{\mathrm{res}} z_k + B^{\mathrm{res}} u_k)
\Big\|.
\end{align}
Using Assumption (A1),
\begin{equation}
\|F(x_k,u_k)-F(\hat x_k,u_k)\|
\le
L \|x_k-\hat x_k\|
=
L \|\tilde x_k\|.
\end{equation}
For the second term, observe that
\begin{equation}
e_{k+1}
=
x_{k+1} - x_{k+1}^{\mathrm{nom}}
=
F(x_k,u_k) - \left(A^{\mathrm{nom}} x_k + B^{\mathrm{nom}} u_k\right). \nonumber
\end{equation}
Then, add and subtract $F(\hat{x}_k,u_k)$ and using Assumption (A2) we obtain,
\begin{equation}
\|\tilde x_{k+1}\|
\le
L\|\tilde x_k\|
+
\varepsilon
+
\left\|
F(\hat x_k,u_k)-F(x_k,u_k)
\right\|.
\end{equation}
Applying Assumption (A1) once more yields
\begin{equation}
\left\|
F(\hat x_k,u_k)-F(x_k,u_k)
\right\|
\le
L \|\tilde x_k\|.
\end{equation}
Thus,
\begin{equation}
\|\tilde x_{k+1}\|
\le
2L\|\tilde x_k\| + \varepsilon
\end{equation}

The multi-step estimate follows by recursively applying the one-step bound:
\begin{align}
\|\tilde x_{k+N}\|
&\le
(2L)^N \|\tilde x_k\|
+
\sum_{i=0}^{N-1}
L^{N-1-i}
\left(
\varepsilon
\right).
\end{align}

This completes the proof.
\end{proof}

\begin{remark}
\label{rem:rk_significance}
Theorem~\ref{thm:rk_error_propagation} provides a formal justification for the residual Koopman approach. Unlike full-state Koopman models, the prediction error depends on the residual \(e_k\), which is typically small since the nominal model captures the dominant dynamics. As a result, the learned correction remains well-behaved, data-efficient, and stable over multi-step prediction. This explains the improved accuracy and robustness observed in practice and supports interpreting the residual Koopman model as a structured representation of model uncertainty within MPC.
\end{remark}


\section{Practical Considerations for Quadruped Locomotion}
\label{subsec:practical_considerations}

While the residual Koopman framework above is general, applying Koopman-based predictors to quadruped locomotion within hierarchical control architectures introduces several practical considerations. Locomotion dynamics depend on contact mode and planned foothold sequences, which are commonly treated as scheduled parameters (exogenous signals) rather than states. These features make dictionary design and regression more delicate, since the lifted predictor must remain consistent under (i) orientation representations used in observables, (ii) parameter scheduling induced by planned footholds, and (iii) the chosen control parametrization (e.g., wrenches versus per-foot GRFs), motivating the following discussion.

\subsection{Orientation representations in observables}
Koopman-based frameworks for SRB dynamics often build observables from Euler angles due to their minimal dimensionality and ease of implementation. However, Euler-angle parameterizations are coordinate dependent: they exhibit singularities and can accumulate representation error over multi-step prediction, leading to rotational inconsistencies when the learned predictor is rolled out over an MPC horizon.

A more geometrically faithful alternative is to construct observables directly on ${SO}(3)$, for example by including the rotation matrix $R_k \in {SO}(3)$ (or ${SE}(3)$-consistent embeddings) in the lifting, as explored in Koopman-based modeling and control on $\mathrm{SE}(3)$ and ${SO}(3)$ \cite{zinage2021koopman,chen2022koopman}. While such choices avoid Euler-angle artifacts, they introduce important practical issues for data-driven identification. First, the linear predictor obtained from regression does not, in general, preserve the nonlinear manifold constraints of ${SO}(3)$. In particular, a multi-step rollout may yield $\hat R_k$ that
drifts away from ${SO}(3)$ (loss of orthogonality and/or $\det(\hat R_k)\neq 1$), even if the one-step fit is accurate. Enforcing $\hat R_k \in {SO}(3)$ during prediction typically requires additional mechanisms such as projection (e.g., nearest orthogonal matrix via SVD), constrained regression, or normalization, which can introduce extra computation and may accumulate error over long horizons.

Second, using $R$ in the lifting increases the dimension of the observable vector: the rotation matrix is typically included through its vectorized form (e.g., $\mathrm{vec}(R)\in\mathbb{R}^9$), so each observable derived from the rotation matrix contributes nine scalar entries to the lifted state. As additional nonlinear features or cross terms are added, the lifted dimension can grow rapidly, increasing data requirements for reliable regression and computational burden in real-time MPC.

\subsection{Footstep locations as scheduled parameters}
In SRB template models, the nominal contact locations enter as scheduled parameters (through moment arms) rather than as states. In particular, the linearized SRB dynamics depend on the planned yaw and nominal footstep locations $A_k(\psi_k)$ and $B_k(\psi_k,\bar r_k)$, where $\bar r_k$ is provided by a footstep planner. In common hierarchical frameworks, the MPC
receives these future foothold references from an upstream planner and treats them as fixed over the horizon. As a result, the predicted state evolution is \emph{conditionally defined} on the planned foothold sequence: even for identical $(x_k,u_k)$, changing $\bar r_k$ changes the nominal state transition through $B_k(\psi_k,\bar r_k)$. This creates a structural source of mismatch for Koopman-based predictors: if the lifting $\psi(\cdot)$ or the learned linear dynamics are conditioned on planned footholds, then multi-step prediction accuracy hinges on the consistency between (i) the planned footholds used to generate the predictor and (ii) the realized contacts on the robot. Deviations in realized footholds due to slip, compliance, foothold erosion, or unmodeled terrain geometry effectively induce a parameter mismatch.

Concretely, a predictor constructed using the planned foothold sequence $\bar r_k$ will generally incur residuals when the realized contact moment arms deviate from plan. Let $\bar{r}_k^{\mathrm{\;real}}$ denote the realized moment arms and define the foothold deviation $\varepsilon_k :=\bar{r}_k^{\mathrm{\;real}}-\bar r_k$. Then, even for the same state and force input $u_k$, the scheduled input mapping satisfies $B_k(\psi_k,\bar{r}_k^{\mathrm{\;real}})u_k \neq B_k(\psi_k,\bar r_k)u_k$ whenever $\varepsilon_k \neq 0$. Over a prediction horizon, such scheduling mismatch can accumulate, particularly in off-road settings where foothold uncertainty is high and contact conditions vary rapidly. This issue is especially relevant for Koopman-based quadruped MPC pipelines that learn linear predictors under nominal gait and foothold patterns \cite{yang2025koopman}, and motivates modeling strategies that either (i) explicitly account for foothold uncertainty in the predictor, or (ii) adapt the
lifted model online/continually as contact conditions evolve \cite{li2025continual}.

\subsection{Control parametrization}
A physics-informed choice of observables for SRB centroidal dynamics naturally suggests representing the input as a net body wrench (total force and torque).
This choice is common in Koopman-based modeling and control frameworks for rigid-body motion \cite{narayanan2023se,rajkumar2025real,zinage2022koopman}. For quadrupeds, however, the net wrench $\left(F, \; \tau \right)$ is not a primitive control input: it is realized through distributed ground reaction forces across multiple contacts, and the mapping from per-foot GRFs to the net wrench depends on the contact mode and foothold geometry (i.e., $ \sum f_i, \textrm{and} \; \sum r_i \times f_i$ as given in \eqref{eq:force_map}). Moreover, when multiple feet are in contact, this mapping is generally non-invertible, implying that a desired wrench corresponds to a family of force distributions whose feasibility is governed by unilateral contact and friction constraints at each foot. 

Consequently, adopting a Koopman-based predictor based on wrench inputs can misalign the learned model with the standard legged MPC formulations that optimize per-foot GRFs subject to contact constraints. Ensuring consistency typically requires either (i) explicitly introducing the per-foot forces $(f_i)$ as decision variables with additional constraints linking them to the net wrench, or (ii) an auxiliary feasibility/projection step that maps an optimized wrench to admissible GRFs, which can add computation and increase model-mismatch when the desired wrench is not realizable.
\subsection{Implications for residual Koopman representations}
\label{subsec:implications_residual}

The above considerations suggest that, for contact-rich quadruped locomotion, Koopman-based predictors are most effective when used as \emph{structured residual models} that preserve the nominal SRB dynamics and their planner-conditioned scheduling variables, rather than attempting to learn a lifted model of the full nonlinear closed-loop system. In particular, we leverage a nominal SRB-based predictor that captures the dominant centroidal trends for \emph{flat-terrain walking} under nominal contact assumptions, and learn a Koopman residual model that corrects systematic mismatch arising in more challenging operating conditions.

Concretely, we treat the nominal dynamics as a physics-based backbone and fit a data-driven residual map from measured transitions, so that the learned residual captures unmodeled effects due to changing ground friction, terrain geometry, and contact variability while remaining compatible with the convex MPC structure. This residual viewpoint retains interpretability (corrections are applied only where the nominal model is insufficient), reduces the learning burden relative to full-state lifting, and enables the same MPC formulation to generalize beyond the nominal regime.

\begin{remark}
\label{rem:rk_disturbance}
The residual Koopman model can also be interpreted as learning a linear disturbance generator in lifted coordinates. In particular, the term 
\(
C^{\mathrm{res}}(A^{\mathrm{res}} z_k + B^{\mathrm{res}} u_k)
\)
acts as a structured, data-driven correction to the nominal dynamics, analogous to a disturbance model that evolves linearly in the lifted space. This perspective clarifies the role of the residual as capturing systematic model mismatch (e.g., contact variability and terrain effects) while preserving a control-oriented linear structure, and naturally connects the proposed framework to disturbance-based and robust MPC formulations.
\end{remark}

In the next section, we describe how we generate diverse training data by executing the nominal walking controller across randomized friction coefficients and a collection of terrain maps, and then identify a Koopman predictor for the one-step residual. We subsequently evaluate the learned residual model on held-out trajectories to assess predictive structure and accuracy, before incorporating it into the MPC framework and presenting closed-loop locomotion results.


\section{Residual Koopman Model Identification}
\label{sec:residual_id}
\begin{figure*}
    \centering
    \includegraphics[width=1\linewidth]{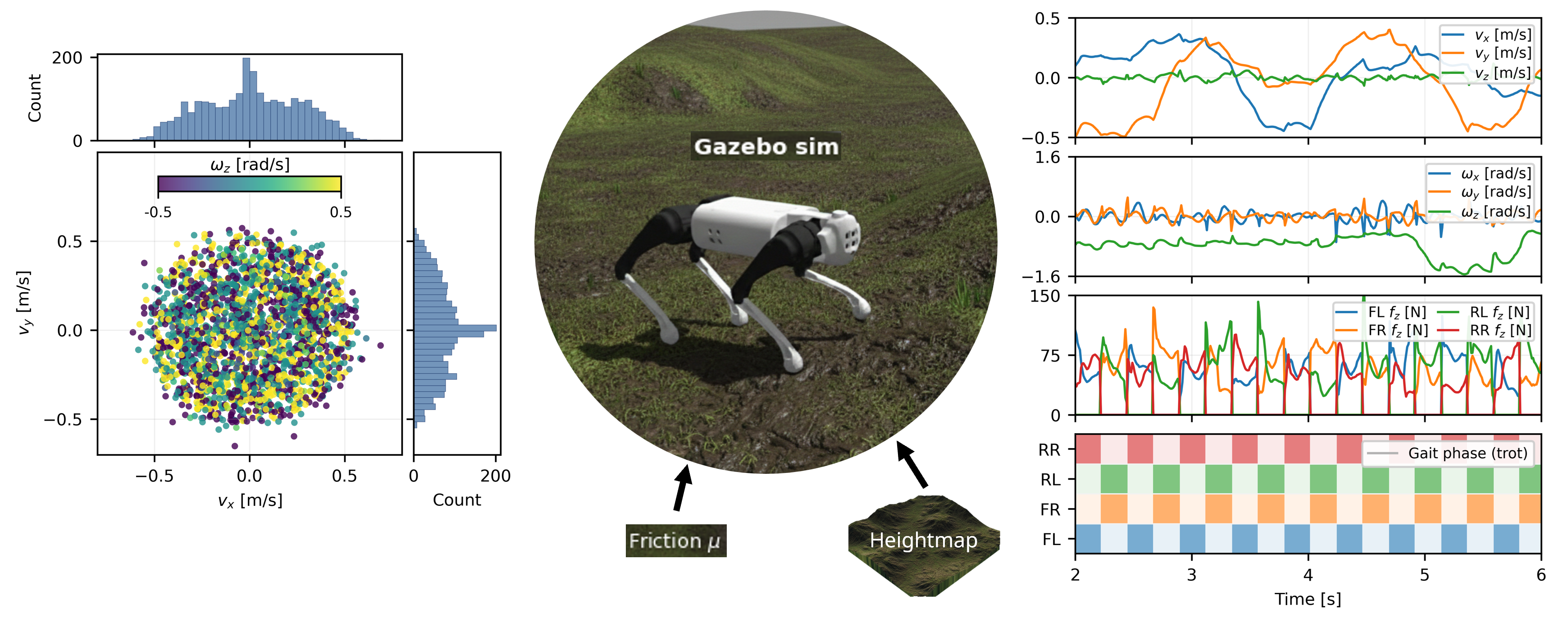}
    \caption{\textbf{Dataset generation for Koopman residual learning.} Left: body-velocity samples $(v_x,v_y)$ with marginal histograms, colored by yaw rate $\omega_z$, illustrating excitation coverage. Middle: Gazebo simulation setup with randomized off-road terrain with per-episode friction $\mu$. Right: representative episode time histories of base linear velocity $v$, angular velocity $\omega$, vertical contact forces $f_z$, and trot gait phase.}
    \label{fig:training_data}
\end{figure*}

In this section, we identify and evaluate the proposed residual Koopman predictor for quadruped locomotion on a Unitree Go1 robot in simulation. We first describe the data collection procedure used to excite a broad range of contact conditions (via randomized friction and terrain variations). We assess predictive performance on held-out trajectories and use the learned predictor in the subsequent MPC formulation to demonstrate closed-loop locomotion over diverse operating conditions.

\subsection{Data collection and preprocessing}
\label{subsec:data_collection}

Training data were generated in simulation using the Gazebo physics engine and a nominal locomotion controller. We collected $10$ independent episodes (approximately $2$ minutes each) at $100$~Hz ($\Delta t=10$~ms). After discarding a short initialization transient and time-aligning states and inputs, the resulting dataset contains $N=121{,}753$ synchronized samples for identification and evaluation. To promote diversity and facilitate sim-to-real transfer, each episode was executed under randomized terrain conditions. The ground friction coefficient was sampled independently per episode from a uniform distribution on $[0.5,\,1.0]$. Terrain geometry was also randomized by alternating between flat ground and mild heightmap-based rough terrain. The rough terrain was generated by low-pass filtering zero-mean Gaussian noise to create spatially correlated height variations, exporting the result as a grayscale PNG heightmap, and scaling it in Gazebo to produce small-amplitude surface roughness. 

During each episode, the nominal controller tracked randomized planar velocity commands. Setpoints were sampled uniformly within $v_x^{\mathrm{cmd}},v_y^{\mathrm{cmd}}\in[-0.7,\,0.7]$~m/s and $\omega_z^{\mathrm{cmd}}\in[-0.5,\,0.5]$~rad/s at a fixed update period and filtered with a first-order low-pass filter to ensure smooth reference profiles. Fig.~\ref{fig:training_data} summarizes the resulting coverage in $(v_x,v_y)$ with $\omega_z$ encoded by color and shows a representative rollout segment with measured velocities, reference commands, GRF inputs, and the trot gait schedule.

At each time step, we log the base position $p_k$ and attitude $R_k\in SO(3)$ together with the base linear and angular velocities $(v_k,\omega_k)$ using a linear Kalman‑filter state estimator (we use the same state estimator in both simulation and hardware). The attitude $R_k$ is obtained by converting the estimator‑provided quaternion to a rotation matrix. All logged velocities are expressed in the world frame. As control inputs, we log the per‑foot ground reaction forces (GRFs) predicted by the nominal controller, stacked as $u_k=[f_{1,k}^\top,\ldots,f_{4,k}^\top]^\top\in\mathbb{R}^{12}$. We also log the planner‑provided moment arms $r_{i,k}$. Finally, we preprocess the dataset by applying a per‑feature standardization (row‑wise $z$‑score scaling) to mitigate unit and magnitude disparities between lifted observables and GRF inputs. We report simple numerical diagnostics to characterize dataset diversity and excitation. Let
$Y:=\begin{bmatrix}y_0&\cdots&y_{N-1}\end{bmatrix}$ denote a stacked signal matrix (planar twist with $y_k=[v_{x,k},\,v_{y,k},\,\omega_{z,k}]^\top$), and evaluate its singular values, numerical rank, and condition number $\kappa(Y)=\sigma_{\max}(Y)/\sigma_{\min}(Y)$. For the planar twist, we obtain $\mathrm{rank}(Y)=3$ with singular values $\{147.58,\ 63.32,\ 59.77\}$ and $\kappa(Y)\approx 2.47$, indicating well‑conditioned coverage in $(v_x,v_y,\omega_z)$.

For all EDMD-based models, we use the stacked per-foot GRFs as the control input $u_k=[f_{1,k}^\top,\ldots,f_{4,k}^\top]^\top$, while the foothold moment arms $r_{i,k}$ enter the nominal predictor as exogenous, time-varying (planner-provided) parameters. All regressions are solved via ridge-regularized least squares (Section~\ref{sec:EDMD}) with $\lambda = 10^{-6}$, and we apply per-feature standardization to the EDMDc regressor $\Omega$.

\subsection{Model Identification}
\label{subsec:residual_edmd_monomials}

\begin{table}[t]
\centering
\caption{Open-loop prediction RMSE (mean $\pm$ std).}
\label{tab:rmse_all_models}
\footnotesize
\setlength{\tabcolsep}{2.2pt}
\renewcommand{\arraystretch}{0.90}
\begin{tabular}{lcccc}
\toprule
State & SRB & EDMD (mono) & EDMD-SE(3) & Res-Koopman \\
\midrule
\multicolumn{5}{l}{\textit{Linear velocity (m/s)}} \\
\midrule
$v_x$ & $0.007\!\pm\!0.003$ & $4.05\!\pm\!1.92$ & $0.003\!\pm\!0.001$ & $0.003\!\pm\!0.001$ \\
$v_y$ & $0.008\!\pm\!0.002$ & $3.19\!\pm\!1.63$ & $0.003\!\pm\!0.001$ & $0.003\!\pm\!0.001$ \\
$v_z$ & $0.016\!\pm\!0.003$ & $1.22\!\pm\!0.55$ & $0.009\!\pm\!0.002$ & $0.007\!\pm\!0.002$ \\
\midrule
\multicolumn{5}{l}{\textit{Angular velocity (rad/s)}} \\
\midrule
$\omega_x$ & $0.091\!\pm\!0.018$ & $3.16\!\pm\!1.91$ & $0.070\!\pm\!0.022$ & $0.064\!\pm\!0.017$ \\
$\omega_y$ & $0.118\!\pm\!0.042$ & $2.11\!\pm\!1.06$ & $0.094\!\pm\!0.044$ & $0.089\!\pm\!0.028$ \\
$\omega_z$ & $0.035\!\pm\!0.005$ & $1.29\!\pm\!0.50$ & $0.016\!\pm\!0.005$ & $0.023\!\pm\!0.005$ \\
\bottomrule
\end{tabular}
\end{table}

\begin{figure*}
  \centering
  \includegraphics[width=\linewidth]{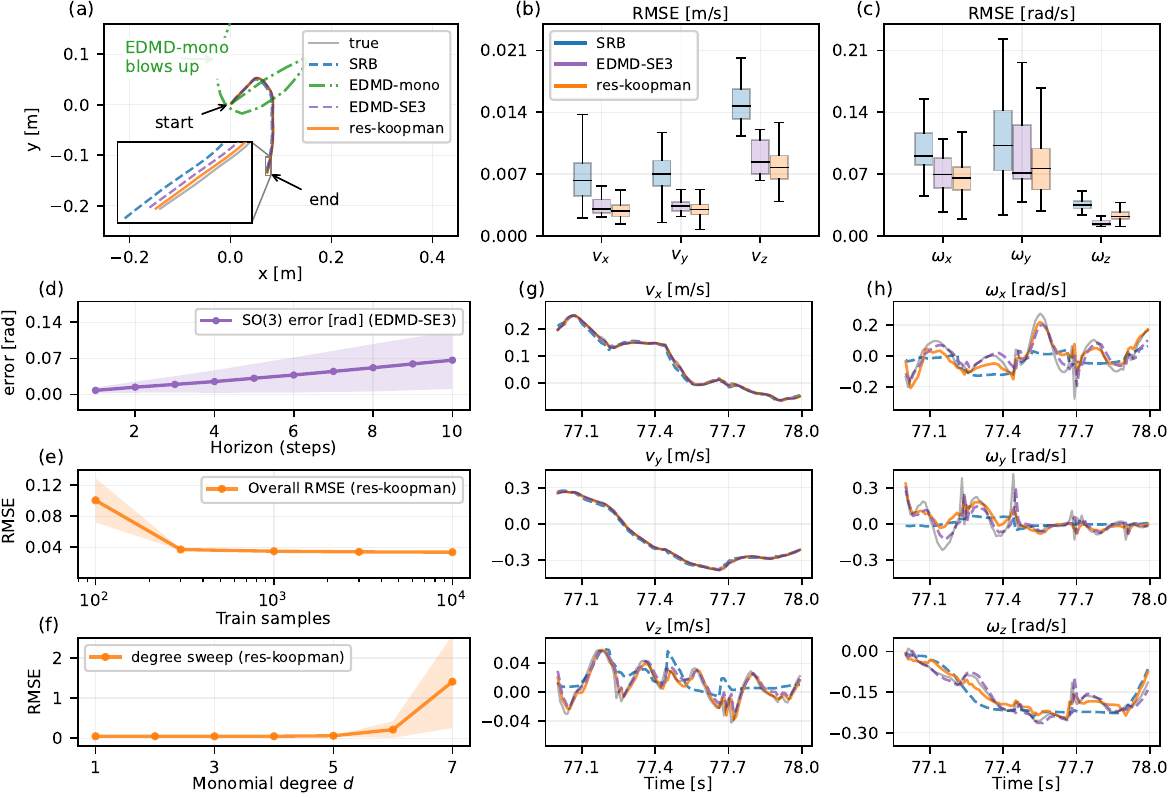}
    \caption{\textbf{Prediction performance:} Comparison of the proposed residual Koopman model (orange) against a nonlinear SRB (blue dashed), EDMD with monomials (green dashed) and EDMD with SE(3) basis (purple dashed against a test trajectory (black). (a) prediction of $x-y$ trajectory over 100 time steps. (b--c) Per-channel RMSE box plots for linear velocities $(v_x,v_y,v_z)$ and angular velocities $(\omega_x,\omega_y,\omega_z)$, 
    (d) Mean $SO(3)$ geodesic attitude error of EDMD-$SE(3)$ over multi-step rollout horizon, 
    (e) Overall residual Koopman RMSE versus the number of training samples used and 
    (f) Overall RMSE error against monomial degree sweep for the residual Koopman model.}

  \label{fig:residual_edmd_fit}
\end{figure*}

In the proposed \textit{residual Koopman} framework, we first construct the residual targets $e_k$ using \eqref{eq:residual_error} from measured transitions and the nominal SRB predictor \eqref{eq:nominal_ltv}. The residual is defined over the centroidal twist states as $e_k := [\,\Delta v_k^\top\;\; \Delta\omega_k^\top\,]^\top \in \mathbb{R}^{6}$, where $\Delta v_k := v_{k+1}-v_{k+1}^{\mathrm{nom}}$ and $\Delta\omega_k := \omega_{k+1}-\omega_{k+1}^{\mathrm{nom}}$. Then, we form lifted snapshots $z_k=\psi(e_k)$ using a monomial dictionary up to degree~$d$,
\begin{equation}
\psi(e_k)
\;:=\;
\big[\,1,\; e_k^\top,\; (e_k^{2})^\top,\; \ldots,\; (e_k^{d})^\top\big]^\top
\in \mathbb{R}^{q},
\label{eq:residual_lift_monomial}
\end{equation}
where $e_k^{j}$ denotes the vector of all degree-$j$ monomials formed from the components of $e_k$. We apply EDMDc to estimate $(A^{\mathrm{res}},B^{\mathrm{res}})$ by minimizing the lifted one-step prediction error in \eqref{eq:residual_ls}, followed by the output regression \eqref{eq:residual_output_ls} to identify $C^{\mathrm{res}}$. We select the residual observable lift $z_k=\psi(e_k)$ as a degree-$2$ monomial dictionary based on the polynomial-degree sweep shown in Fig.~\ref{fig:residual_edmd_fit}{(f)}. After per-feature standardization, the EDMDc regressor $\Omega$ is full row rank with $\mathrm{rank}(\Omega)=q+m$ and well-conditioned with $\kappa(\Omega)=5.03$, supporting numerically stable estimation of the lifted predictor. Although the lifted state can be high-dimensional, the learned correction is always applied in the physical space by projecting back to the $6$-dimensional residual via $e_k \approx C^{\mathrm{res}} z_k$. 

To assess data requirements and guide model selection, Fig.~\ref{fig:residual_edmd_fit}{(e)} reports sample-efficiency results obtained by fitting the residual EDMDc model on random subsets of the training data and evaluating the overall windowed one-step RMSE on a fixed test set (50 windows). Performance improves rapidly with additional samples and largely saturates around 1000 transitions, decreasing from $0.3759\pm 0.1385$ at 100 samples to $0.0403\pm 0.0046$ at 1000 samples, with diminishing returns thereafter ($0.0349\pm 0.0035$ at 3000 and $0.0333\pm 0.0032$ at 10{,}000). Fig.~\ref{fig:residual_edmd_fit}{(f)} further shows a polynomial-degree sweep for $z_k=\psi(e_k)$ using degree-$d$ monomials in the $6$-dimensional residual $e_k$: while lower degrees achieve similar test error, higher degrees substantially increase the lifting dimension $q$ and degrade generalization despite stronger regularization. Table~\ref{tab:rmse_all_models} reports test-set RMSE for one-step prediction over $100$-step rollouts, averaged across $100$ randomly sampled windows from held-out trajectories.

\subsection{Prediction performance comparison with baseline EDMD}

In Fig.~\ref{fig:residual_edmd_fit} and Table~\ref{tab:rmse_all_models}, we compare the proposed residual Koopman predictor (orange) against the nominal nonlinear SRB model (blue dashed), and two EDMD baseilens: degree-4 monomial EDMD (green dashed), and SE(3)-based EDMD \cite{narayanan2023se,zinage2021koopman} (purple dashed). All predictions are evaluated over $100$-step open-loop rollouts, sampled at random from held-out test trajectories. RMSE is reported as the average one-step prediction error computed over $100$ time steps and then averaged across $100$ randomly selected windows. For both EDMD baselines, we identify a linear predictor in a lifted space as in Section~\ref{sec:EDMD}. The monomial baseline uses
\begin{equation}
\Psi_{\mathrm{mono}}(x)
=
\big[\,1,\; x^\top,\; (x^{2})^\top,\; \ldots,\; (x^{d})^\top \big]^\top,
\label{eq:EDMD_monomial}
\end{equation}
with $x := [\,p^\top,\;\theta^\top,\;v^\top,\;\omega^\top\,]^\top$, where $x^{d}$ denotes the vector of all degree-$d$ monomials. As shown by the green trajectory in Fig.~\ref{fig:residual_edmd_fit}(a) and the RMSE statistics in Table~\ref{tab:rmse_all_models}, this monomial EDMD model exhibits large prediction errors and can become unstable over multi-step rollouts, yielding poor one-step and long-horizon performance.

We next consider an SE(3)-structured EDMD baseline that incorporates rotation-aware features. Specifically, we form $\psi_{SE(3)}(x)=\big[\,p^\top,\;v^\top,\;\mathrm{vec}(R)^\top,\;\omega^\top,\;h(x)\big]$, where
\begin{equation}
h(x)
=
\Big[\,
\mathrm{vec}(R{\omega}^\wedge)^\top,
\mathrm{vec}(R{\omega^\wedge}^{2})^\top,
\ldots,
\mathrm{vec}(R{\omega^\wedge}^{d})^\top
\Big]^\top, 
\label{eq:EDMD_SE3}
\end{equation}
with $d=4$ (degree~4). This dictionary avoids the catastrophic blow-up of monomial EDMD and achieves substantially lower RMSE (Table~\ref{tab:rmse_all_models} and Fig.~\ref{fig:residual_edmd_fit}(b,c)). However, because the lift is constructed from the estimated attitude, small orientation errors perturb the lifted coordinates and compound over multi-step rollouts, leading to horizon-dependent attitude drift. We quantify this effect using the $SO(3)$ geodesic distance \cite{barfoot2024state}
\begin{equation}
e_R \;:= \bigg\|tr\left(\mathbb{I}-R^\top \hat{R} \right) \bigg\|_2,\nonumber
\end{equation}
which increases approximately linearly with the prediction horizon in Fig.~\ref{fig:residual_edmd_fit}(d), indicating a distinct failure mode when used in a predictive control setting.

In the $xy$ plane (Fig.~\ref{fig:residual_edmd_fit}(a)), the residual Koopman predictor remains closest to the measured trajectory, while the nominal SRB model exhibits mild drift under heightmap-induced contact variability and the degree-$4$ monomial EDMD model diverges. The corresponding time-series (Fig.~\ref{fig:residual_edmd_fit}(g,h)) show that all models track $v_x$ and $v_y$ closely over this segment, but the residual Koopman predictor most accurately captures the $v_z$ fluctuations induced by vertical motion and contact modulation, and provides the tightest overall tracking of $(\omega_x,\omega_y,\omega_z)$. These trends are consistent with the aggregate RMSE box plots in Fig.~\ref{fig:residual_edmd_fit}(b,c) and Table~\ref{tab:rmse_all_models}, and motivate learning a compact residual correction on top of a physically meaningful nominal predictor rather than relying on a single lifted EDMD model of the full state.

These results motivate using the learned residual Koopman model as a lightweight correction term within a predictive controller: it preserves the interpretability and structure of the SRB template while improving short-horizon prediction fidelity under contact variability. We next incorporate this residual Koopman predictor into our MPC formulation.

\section{Convex Residual Koopman MPC for Quadruped Locomotion}
\label{sec:MPC}

We now embed the residual Koopman predictor from Section~\ref{sec:residual_koopman} into a receding-horizon MPC. Based on the linearized nominal SRB template \eqref{eq:nominal_ltv} we design an MPC formulation that uses the residual-corrected linear predictor while preserving a convex QP structure.

\subsection{Residual Koopman MPC Formulation}
\label{subsec:reskoop_mpc}

\begin{figure*}[t]
\vspace{-1mm}
\hrule\vspace{0.8mm}
{\small\textit{Residual Koopman MPC formulation.}}
\begin{subequations}
\label{eq:reskoop_mpc_ocp}
\begin{align}
\min_{\{x_{k+i},\,u_{k+i}\}} \quad
& \sum_{i=0}^{N-1}
\Big(
(x_{k+i+1}-x_{k+i+1}^\star)^\top Q (x_{k+i+1}-x_{k+i+1}^\star)
+
u_{k+i}^\top R\,u_{k+i}
\Big)
\nonumber\\[-0.5mm]
&\hspace{3.2em}+
(x_{k+N}-x_{k+N}^\star)^\top Q_f (x_{k+N}-x_{k+N}^\star)
\label{eq:reskoop_mpc_obj}\\[0.5em]
\text{s.t.}\quad
& x_{k+i+1}=A^{\mathrm{nom}}x_{k+i}+B^{\mathrm{nom}}u_{k+i}+C^{\mathrm{res}}(A^{\mathrm{res}} z_{k+i}+ B^{\mathrm{res}} u_{k+i}),
\label{eq:reskoop_mpc_xdyn}\\
& A_{\mathrm{ineq}}(\sigma_{k+i})\,u_{k+i}\le b_{\mathrm{ineq}}(\sigma_{k+i}),
\label{eq:reskoop_mpc_ineq} \\
& z_k=\psi(e_k),\quad
e_k=\big[(v_k-v_k^{\mathrm{nom}})^\top\;(\omega_k-\omega_k^{\mathrm{nom}})^\top\big]^\top,
\label{eq:reskoop_mpc_ic}
\end{align}
\end{subequations}
\vspace{0.8mm}\hrule
\vspace{-2mm}
\end{figure*}

Using the residual Koopman representation \eqref{eq:residual_lifted}, the MPC propagates the lifted residual state $z_k=\psi(e_k)$ alongside the nominal SRB prediction. The combined predictor is
\begin{align}
\hat x_{k+1}
&=
x_{k+1}^{\mathrm{nom}} + \hat e_{k+1} \nonumber \\
&=
A^{\mathrm{nom}} x_k + B^{\mathrm{nom}} u_k + C^{\mathrm{res}} z_{k+1}, 
\label{eq:mpc_corrected_clean}\\
z_{k+1}
&=
A^{\mathrm{res}} z_k + B^{\mathrm{res}} u_k,
\qquad
\hat e_k = C^{\mathrm{res}} z_k.
\label{eq:mpc_lifted_clean}
\end{align}
Here, $x_k\in\mathbb{R}^{13}$ is the SRB state
$x_k=[\,p_k^\top\;\theta_k^\top\;v_k^\top\;\omega_k^\top\;1\,]^\top$
defined in \eqref{eqn:original_states}, and $u_k\in\mathbb{R}^{6}$ is the net centroidal wrench applied to the base,
$u_k=[\,F_k^\top\;\tau_k^\top\,]^\top$.
In our implementation, the residual is defined only over the velocity states,
\begin{align}
e_k
:=
\begin{bmatrix}
v_k - v_k^{\mathrm{nom}}\\[1mm]
\omega_k - \omega_k^{\mathrm{nom}}
\end{bmatrix}
\in\mathbb{R}^6,
\label{eq:velocity_residual_def}
\end{align}
and $z_k\in\mathbb{R}^{q}$ denotes the lifted residual coordinates produced by the dictionary $\psi(\cdot)$ acting on \eqref{eq:velocity_residual_def}. Accordingly, $C^{\mathrm{res}} z_k$ in \eqref{eq:mpc_corrected_clean} is interpreted as an additive correction to the predicted velocities within the full SRB state, while the nominal model propagates $(p,\theta)$ through \eqref{eq:nominal_ltv}. The nominal matrices $A_k^{\mathrm{nom}}:=A(\psi_k)$ and $B_k^{\mathrm{nom}}:=B(\psi_k,\bar r_k)$ are as in \eqref{eq:nominal_ltv}, and $(A^{\mathrm{res}},B^{\mathrm{res}},C^{\mathrm{res}})$ are identified via \eqref{eq:residual_ls}--\eqref{eq:residual_output_ls}.


To track a reference trajectory $\{x_{k+i}^\star\}_{i=0}^{N}$, we solve a receding-horizon optimal control problem using the residual-corrected predictor in \eqref{eq:mpc_corrected_clean}--\eqref{eq:mpc_lifted_clean}. At each time step $k$, we compute the nominal prediction $x_k^{\mathrm{nom}}$ from \eqref{eq:nominal_ltv} and form the residual state only in the velocity channels,
\[
e_k :=
\big[(v_k-v_k^{\mathrm{nom}})^\top\;(\omega_k-\omega_k^{\mathrm{nom}})^\top\big]^\top
\in \mathbb{R}^{6},
\]
which is lifted as $z_k=\psi(e_k)$ as described in \eqref{eq:reskoop_mpc_ic}. The decision variable is the sequence of stance-foot contact forces
\[
u_{k+i}:=\big[\,f_{1,k+i}^\top\;\cdots\;f_{n_c,k+i}^\top\,\big]^\top \in \mathbb{R}^{3n_c},
\]
where $n_c$ denotes the number of stance feet at stage $k+i$. The resulting Residual Koopman MPC (RK-MPC) formulation is provided in \eqref{eq:reskoop_mpc_obj}- \eqref{eq:reskoop_mpc_ic}.

Here, $Q\succeq 0$ and $R\succ 0$ weight state tracking and force effort, and $Q_f\succeq 0$ is an optional terminal weight. The mode $\sigma_{k+i}$ denotes the stance-foot set at stage $k+i$ (gait schedule), which determines which contact forces are active and which inequality constraints are enforced. For each stance foot $j\in\sigma_{k+i}$ with force $f_{j,k+i}=[f_{j,x}\;f_{j,y}\;f_{j,z}]^\top$, we enforce a linear friction pyramid approximation \cite{trinkle1997dynamic}
\[
|f_{j,x}| \le \mu f_{j,z},
\qquad
|f_{j,y}| \le \mu f_{j,z},
\]
together with vertical force bounds
\[
f_{\min} \le f_{j,z} \le f_{\max}.
\]
These constraints can be written compactly as
\begin{align}
A_{\mu}\, f_{j,k+i} \le b_{\mu},
\label{eq:friction_pyr_foot}
\end{align}
where
\begin{align}
A_{\mu} :=
\begin{bmatrix}
-1 & 0  & -\mu \\
 1 & 0  & -\mu \\
 0 & -1 & -\mu \\
 0 & 1  & -\mu \\
 0 & 0  & -1 \\
 0 & 0  &  1 \\
\end{bmatrix},
\qquad
b_{\mu} :=
\begin{bmatrix}
0\\0\\0\\0\\-f_{\min}\\ f_{\max}
\end{bmatrix}.
\label{eq:friction_pyr_mats}
\end{align}
Stacking \eqref{eq:friction_pyr_foot} over all stance feet yields the stage-wise inequality in \eqref{eq:reskoop_mpc_ineq}, i.e., $A_{\mathrm{ineq}}(\sigma_{k+i})u_{k+i}\le b_{\mathrm{ineq}}(\sigma_{k+i})$. At each control step, we apply the first optimal contact-force command $u_k^\star$ in a receding-horizon fashion.

Because the cost in \eqref{eq:reskoop_mpc_ocp} is quadratic and the constraints
\eqref{eq:reskoop_mpc_xdyn} and \eqref{eq:reskoop_mpc_ineq} are linear in the decision variables, \eqref{eq:reskoop_mpc_ocp} is a convex quadratic program. In particular, stacking the horizon inputs into
\[
U := [\,u_k^\top\;u_{k+1}^\top\;\cdots\;u_{k+N-1}^\top\,]^\top
\]
and eliminating the state variables via the recursion yields the standard form
\begin{align}
\min_{U}\quad & \tfrac{1}{2}U^\top H U + G^\top U, \nonumber\\
\text{s.t.}\quad & A_{\mathrm{ineq}}^{N} U \le b_{\mathrm{ineq}}^{N} \nonumber
\end{align}
for appropriate matrices $(H,G,A_{\mathrm{ineq}}^{N},b_{\mathrm{ineq}}^{N})$ determined by the nominal SRB matrices $(A^{\mathrm{nom}},B^{\mathrm{nom}})$, the residual Koopman operators $(A^{\mathrm{res}},B^{\mathrm{res}},C^{\mathrm{res}})$, the horizon weights $(Q,R,Q_f)$, and the contact schedule $\sigma$. This QP can be solved efficiently with standard QP solvers. Note that the nominal state $x_k^{\text{nom}}$ is propagated online alongside the measured state. 


\subsection{Tracking performance comparison with baseline EDMD}
\label{subsec:sim_tracking_compare}

\begin{figure*}
  \centering
  \includegraphics[width=\linewidth]{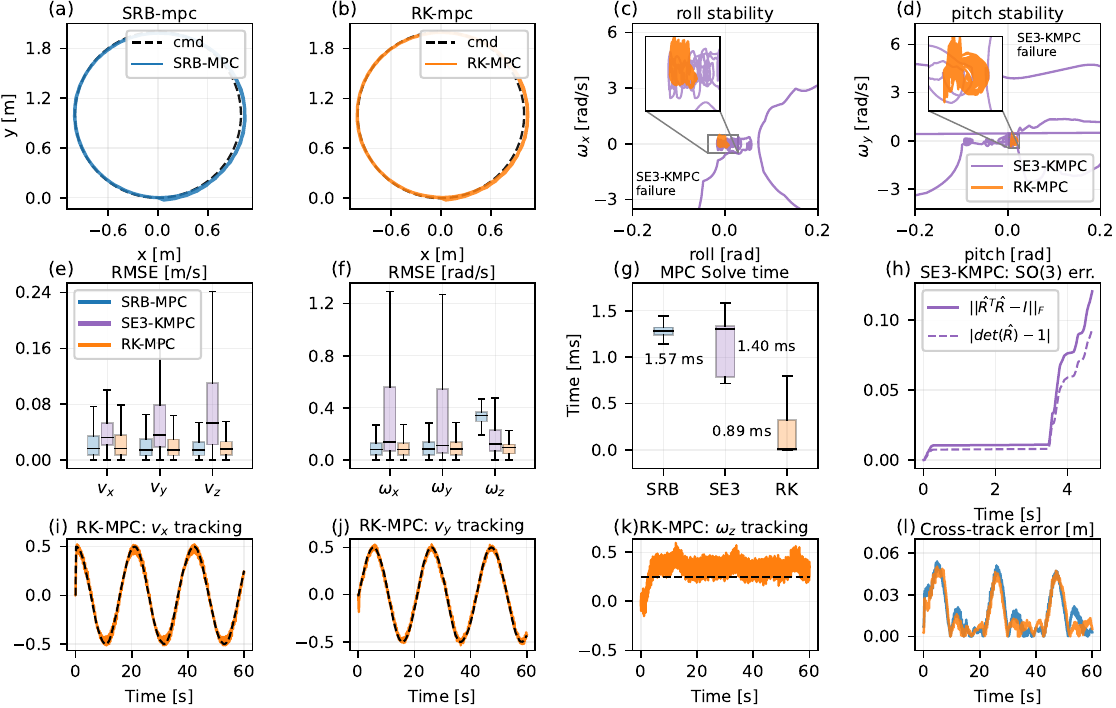}
  \caption{\textbf{MPC tracking performance (sim):} Comparison of the proposed RK-MPC framework (orange) against a nonlinear SRB-MPC (blue) and SE3-KMPC (purple) on a circular tracking task over 10 laps.
(a)--(b) Executed base $x$--$y$ trajectory of the SRB-MPC and the RK-MPC respectively.
(c)--(d) Phase portraits for attitude stability where SE3-KMPC exhibits a failure mode with excursions away from the near-origin region.
(e)--(f) Per-axis tracking RMSE distributions for linear velocities and angular rates across 10 laps of the circle.
(g) MPC solve-time distributions (ms) for each controller.
(h) Closed-loop ${SO}(3)$ drift of the SE3-KMPC internal rotation state.
(i)--(k) Tracking performance of RK-MPC for $(v_x,v_y,\omega_z)$, showing close agreement with the commanded signals.
(l) Cross-track error for RKMPC and SRB-MPC over time.}
  \label{fig:sim_mpc_somparison}
\end{figure*}

We evaluate the proposed Residual Koopman MPC (RK-MPC) from
\eqref{eq:reskoop_mpc_ocp} in Gazebo simulation on a velocity-tracking task that
executes a circular reference for $10$ laps. A smooth global planner takes the
body-frame velocity commands from user input and converts them into a world-frame CoM (base)
state reference, yielding the reference sequence $\{x_{k+i}^\star\}_{i=0}^{N}$
used in the tracking objective \eqref{eq:reskoop_mpc_obj}. All controllers are
evaluated under identical simulation conditions (same gait schedule, contact
parameters, and command profile), and use the same sampling time $\Delta t$ and
horizon length $N$. For RK-MPC, the nominal predictor inside the horizon is the
linearized SRB template \eqref{eq:nominal_ltv} discretized at $\Delta t=0.01$ s, while
the learned residual Koopman dynamics \eqref{eq:mpc_lifted_clean} provide an
additive correction to the velocity channels through
\eqref{eq:mpc_corrected_clean}. 

The RK-MPC has a state dimension $n_x=13$ and the control input is the stacked ground reaction forces of four feet, $u=[f_1^\top\,f_2^\top\,f_3^\top\,f_4^\top]^\top\in\mathbb{R}^{12}$. We use a prediction horizon of $N=8$ with the MPC update rate set to $100$~Hz. Contact constraints enforce a linear friction-pyramid model. The friction
coefficient $\mu$ is randomized by sampling uniformly from
$[0.5,\,1.0]$ and is held constant within the eperiment. To evaluate robustness to
mild terrain variation, episodes alternate between flat ground and randomly
generated heightmap terrain, where the heightmap is obtained by low-pass
filtering zero-mean Gaussian noise to produce spatially correlated, small-amplitude
surface roughness in Gazebo. The normal-force limit is upper bounded by $f_{z}\le f_{z,\max}=180$~N for stance feet. The stage cost \eqref{eq:reskoop_mpc_obj} uses diagonal weights 
\begin{align*}
Q&=\mathrm{diag}(1,1,200,30,30,1,20,20,20,1,1,1,0),\\
R&=10^{-5}\mathrm{diag}(1,1,0.1,\,1,1,0.1,\,1,1,0.1,\,1,1,0.1).
\end{align*}
with $Q_f=Q$. The resulting condensed quadratic program is solved using QuadProg++ \cite{di2007quadprog++}. All simulation experiments are run in ROS + Gazebo on Ubuntu~20.04 using a Dell OptiPlex workstation equipped with a 12th Gen Intel Core i9-12900K ($\sim$3.2~GHz) and 32~GB RAM.

\begin{table}[t]
\centering
\caption{Closed-loop tracking RMSE and MPC solve-time statistics on a 10-lap circular task in simulation.}
\label{tab:rmse_tracking_mpc}
\footnotesize
\setlength{\tabcolsep}{3.0pt}
\renewcommand{\arraystretch}{0.92}
\begin{tabular}{lccc}
\toprule
Metric & SRB-MPC & SE3-KMPC & RK-MPC \\
\midrule
\multicolumn{4}{l}{\textit{Linear velocity RMSE (m/s)}} \\
\midrule
$v_x$ & $0.0229\!\pm\!0.0201$ & $0.0686\!\pm\!0.0949$ & $0.0232\!\pm\!0.0201$ \\
$v_y$ & $0.0196\!\pm\!0.0167$ & $0.0877\!\pm\!0.1237$ & $0.0195\!\pm\!0.0165$ \\
$v_z$ & $0.0172\!\pm\!0.0126$ & $0.1374\!\pm\!0.2109$ & $0.0177\!\pm\!0.0129$ \\
\midrule
\multicolumn{4}{l}{\textit{Angular-rate RMSE (rad/s)}} \\
\midrule
$\omega_x$ & $0.0910\!\pm\!0.0695$ & $1.0646\!\pm\!2.1215$ & $0.0925\!\pm\!0.0713$ \\
$\omega_y$ & $0.0967\!\pm\!0.0758$ & $0.7421\!\pm\!1.3950$ & $0.0982\!\pm\!0.0769$ \\
$\omega_z$ & $0.3368\!\pm\!0.0698$ & $0.2936\!\pm\!0.4394$ & $0.0938\!\pm\!0.0662$ \\
\midrule
\multicolumn{4}{l}{\textit{MPC solve time (ms)}} \\
\midrule
Solve time & $1.40\!\pm\!0.72$ & $1.57\!\pm\!1.27$ & $0.89\!\pm\!6.58$ \\
\bottomrule
\end{tabular}
\end{table}

We compare the proposed RK-MPC framework against two MPC pipelines that differ
only in the prediction model. The baseline SRB-MPC
solves a nonlinear MPC using the full SRB dynamics model, which leads to a
non-convex optimization problem but serves as a high-fidelity reference. In
contrast, the EDMD-SE(3) based MPC formulation (SE3-KMPC) replaces the nonlinear SRB predictor with
a linear Koopman/EDMD predictor identified using the SE(3)-structured dictionary (with degree 4)
\eqref{eq:EDMD_SE3}. This yields a convex QP with the same horizon, tracking
weights, and contact constraints as RK-MPC, but its lifted state depends
explicitly on a propagated rotation matrix $\hat R$ (through $\mathrm{vec}(\hat
R)$ and $h(x)$). As seen in the open-loop prediction results in
Fig.~\ref{fig:residual_edmd_fit}(d), small rotation-consistency errors can
accumulate over the prediction horizon and degrade closed-loop performance.

Figure~\ref{fig:sim_mpc_somparison} and Table \ref{tab:rmse_tracking_mpc} summarizes the closed-loop tracking behavior
for a circular trajectory over 10 laps. The executed base trajectories in
Fig.~\ref{fig:sim_mpc_somparison}(a)--(b) show that SRB-MPC and RK-MPC both track
the commanded circle with small drift, with RK-MPC exhibiting slightly tighter
overlap over the full rollout. This is also reflected
in the cross-track error plot in Fig.~\ref{fig:sim_mpc_somparison}(l), where both
controllers remain bounded with comparable error profiles, and RK-MPC exhibits
slightly reduced peaks over repeated laps. Further, the velocity plots in Fig.~\ref{fig:sim_mpc_somparison}(i)--(k) show that RK-MPC
follows the commanded velocities well over repeated laps. Both SRB-MPC and RK-MPC complete the full 10-lap rollout (about 250~s), whereas SE3-KMPC becomes unstable and fails within the first few seconds (about 5~s).

The stability limitation of SE3-KMPC is highlighted in
Fig.~\ref{fig:sim_mpc_somparison}(c)--(d) and (h). The roll/pitch phase portraits
in Fig.~\ref{fig:sim_mpc_somparison}(c)--(d) show large excursions for SE3-KMPC.
The zoomed view shows that RK-MPC remains bounded near the nominal region over
the same rollout. This behavior is consistent with the rotation-manifold drift
in Fig.~\ref{fig:sim_mpc_somparison}(h). During runtime we log the internal
predicted rotation $\hat R$ used to construct the SE(3)-aware lifted basis. We
observe that $\hat R$ progressively departs from ${SO}(3)$, as quantified
by increasing $\|\hat R^\top \hat R-I\|_F$ and $|\det(\hat R)-1|$. This drift can
be reduced by explicitly projecting $\hat R$ back onto ${SO}(3)$ at each
step, but doing so adds extra operations and implementation complexity.

Fig.~\ref{fig:sim_mpc_somparison}(e)--(f) reports RMSE distributions for
linear-velocity and angular-rate tracking across rollouts. For a compact summary,
we aggregate RMSE across axes by averaging the per-axis means. In linear velocity,
SRB-MPC and RK-MPC are essentially identical, with an overall mean RMSE of
$0.0199$~m/s (SRB-MPC) and $0.0201$~m/s (RK-MPC), while SE3-KMPC is substantially
worse at $0.0979$~m/s. In angular-rate tracking, RK-MPC achieves the lowest
overall mean RMSE of $0.0948$~rad/s, compared to $0.1748$~rad/s for SRB-MPC and
$0.7001$~rad/s for SE3-KMPC. This improvement is driven primarily by the yaw-rate
channel, where RK-MPC maintains a tight $\omega_z$ response across laps. 

Finally, we report the MPC solve time, i.e., the per-iteration wall-clock time required to solve the finite-horizon optimization and produce a new control sequence. In Fig.~\ref{fig:sim_mpc_somparison}(g), RK-MPC attains the lowest solve times with a mean of 0.89 ms, compared to the SE3-KMPC (1.40 ms) and the nonlinear SRB-MPC (1.57 ms).

Overall, these results show that RK-MPC preserves the command-tracking quality of
the nonlinear SRB-MPC baseline while offering a faster convex QP implementation.
In addition, RK-MPC avoids the rotation-consistency drift that destabilizes
SE3-KMPC, and therefore provides a more reliable Koopman-based MPC pipeline for
long-horizon locomotion tracking.

\section{Experimental Validation} \label{sec:experiments}
\begin{figure*}
  \centering
  \includegraphics[width=\linewidth]{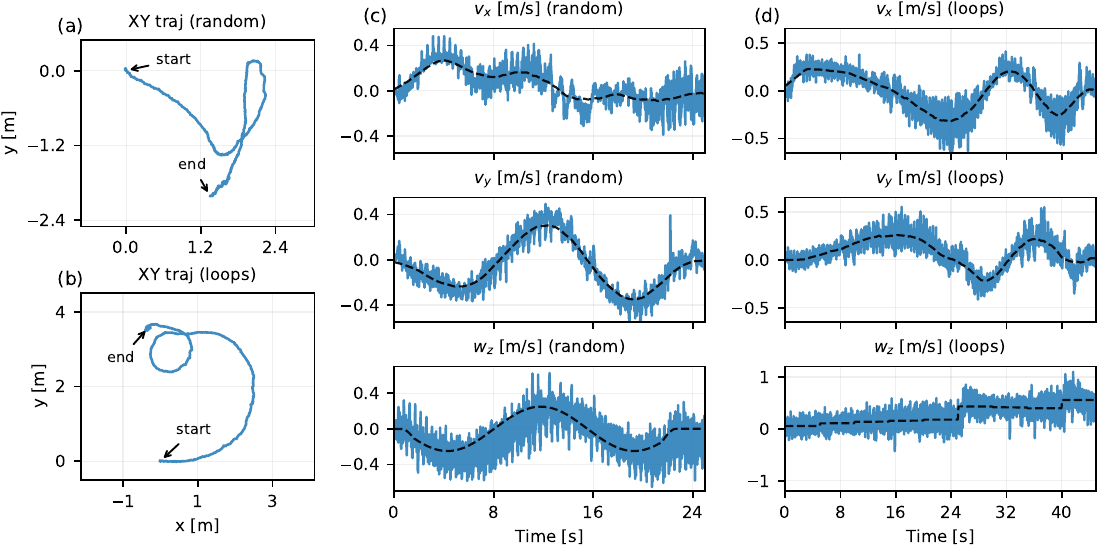}
  \caption{\textbf{Experiment 1: Velocity tracking on flat ground.}
  (a) Executed base $x$--$y$ trajectory during randomized planar commands (start/end annotated).
  (b) Executed base $x$--$y$ trajectory during smooth loop commands (start/end annotated).
  (c) Randomized segment: commanded (black dashed) and measured (solid) $(v_x,\,v_y,\,\omega_z)$.
  (d) Loop segment: commanded (black dashed) and measured (solid) $(v_x,\,v_y,\,\omega_z)$.}
  \label{fig:hw_mpc_tracking}
\end{figure*}

We validate the proposed Residual Koopman MPC (RK-MPC) on a Unitree Go1 in real-world hardware experiments. The full control stack runs fully onboard on an NVIDIA Jetson Xavier NX. At each control cycle, RK-MPC solves the convex QP in \eqref{eq:reskoop_mpc_ocp} using the residual-corrected predictor in \eqref{eq:mpc_corrected_clean}--\eqref{eq:mpc_lifted_clean}, and applies the first optimal contact-force command $u_k^\star$ in a receding-horizon fashion. User commands are specified as planar velocity setpoints $(v_x^{\mathrm{cmd}},\,v_y^{\mathrm{cmd}},\,\omega_z^{\mathrm{cmd}})$ and are converted by a smooth global planner into a world-frame SRB reference trajectory, yielding the reference sequence $\{x_{k+i}^\star\}_{i=0}^{N}$ used in the tracking objective \eqref{eq:reskoop_mpc_obj}. State estimation is performed using the same linear Kalman filter, which provides the SRB state $x_k$ (base pose and body velocities). Gait phase and contacts are generated by a wave-based gait scheduler, and swing footholds are planned using a Raibert-style velocity-tracking heuristic. Low-level swing and stance torques are computed by mapping Cartesian swing forces and MPC ground-reaction forces through the leg Jacobians as described in the hierarchical architecture of the Preliminaries (footstep planning and torque mapping in \eqref{eq:raibert_td}--\eqref{eq:tau_stance}). We evaluate in a blind-locomotion setting without exteroceptive sensing, relying only on proprioception and commanded references. The goal is to demonstrate accurate tracking of time-varying planar commands and robust locomotion under model mismatch and disturbances.

The state estimator and RK-MPC controller run at 500\,Hz, while the MPC model uses a discretization step of $0.01$\,s. Swing and stance joint torques are computed at 500\,Hz. On hardware, the controller supports agile planar tracking up to peak forward speeds of $0.6$~m/s under nominal flat-ground conditions.

\subsection{Experiment 1: Velocity tracking on flat ground}
\label{subsec:hw_tracking}
In this experiment, we evaluate the closed-loop tracking performance of the proposed RK-MPC on flat indoor terrain under time-varying planar velocity commands. The commanded references are specified as $(v_x^{\mathrm{cmd}},\,v_y^{\mathrm{cmd}},\,\omega_z^{\mathrm{cmd}})$, following the notation used in Sec.~\ref{subsec:data_collection}, and are converted by the planner into the reference sequence $\{x_{k+i}^\star\}_{i=0}^{N}$ used in \eqref{eq:reskoop_mpc_obj}. We consider two representative command profiles. First, we apply randomized planar commands (sampled and filtered as in Sec.~\ref{subsec:data_collection}) to excite frequent changes in direction and yaw rate. The resulting executed $x$--$y$ path is shown in Fig.~\ref{fig:hw_mpc_tracking}(a). Second, we apply a smoother, approximately sinusoidal command pattern that induces sustained curvature, resulting in a loop-like/circular trajectory in the plane (Fig.~\ref{fig:hw_mpc_tracking}(b)). 

Figure~\ref{fig:hw_mpc_tracking}(c)--(d) report the corresponding time histories of commanded (black dashed) and measured (solid) planar velocities $(v_x,\,v_y,\,\omega_z)$ for the randomized and loop segments, respectively. In both cases, the measured velocities closely track the commanded references, with bounded oscillations attributable to gait impacts and contact transitions. Overall, these results demonstrate that RK-MPC achieves reliable real-time planar command tracking on hardware in nominal flat-ground conditions.

\subsection{Experiment 2: Robustness to debris and external push}
\label{subsec:hw_debris_push}
We next evaluate the robustness of RK-MPC under contact disturbances and unmodeled terrain interactions in a blind-locomotion setting. First, we command the robot to traverse a debris field consisting of loose wooden pieces and small obstacles (Fig.~\ref{fig:hw_mpc_debris_push}(a)). Over $10$ trials at $v_x^{\mathrm{cmd}}=0.5\,\textrm{m/s},\,v_y^{\mathrm{cmd}}=0\,\textrm{m/s},\,\omega_z^{\mathrm{cmd}}=0\,\textrm{rad/s}$, the robot successfully completes $9/10$ runs without failure, demonstrating that the residual-corrected predictor maintains stable tracking despite intermittent foot slippage and height variations introduced by the debris. Figures~\ref{fig:hw_mpc_debris_push}(b)--(c) further summarize attitude stability by plotting phase portraits (roll versus $\omega_x$ and pitch versus $\omega_y$), comparing flat-ground walking against the debris condition. While debris induces larger excursions than flat terrain, the trajectories remain bounded and return to the near-origin region, indicating stable closed-loop behavior.

Second, we evaluate recovery from external disturbances by applying repeated lateral and longitudinal pushes during steady walking (Fig.~\ref{fig:hw_mpc_debris_push}(d)). We conduct two sequences of three pushes (six pushes total), and in all cases the robot maintains balance and returns to steady tracking without falling. Figures~\ref{fig:hw_mpc_debris_push}(e)--(f) show representative time histories of the measured planar velocities $(v_x,v_y)$ and base angles (roll, pitch) during the push trials, where each push produces a transient deviation followed by rapid recovery back toward nominal motion.

\begin{figure*}
  \centering
  \includegraphics[width=\linewidth]{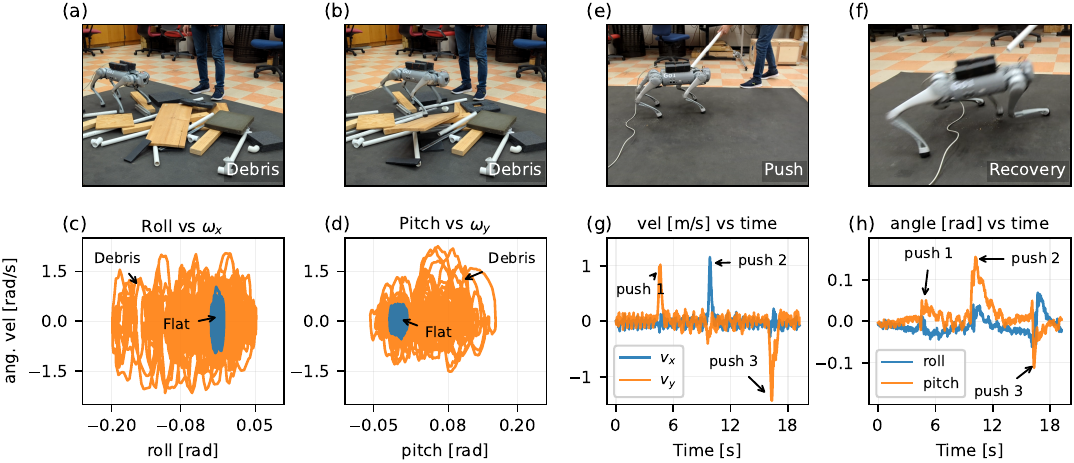}
  \caption{\textbf{Experiment 2: Debris traversal and push recovery.}
  (a) Representative snapshots of blind locomotion over a debris field.
  (b)--(c) Attitude phase portraits comparing debris (orange) and flat ground (blue): roll versus $\omega_x$ and pitch versus $\omega_y$.
  (d) External push trial with recovery during steady walking.
  (e) Measured planar velocities $(v_x,v_y)$ during the push sequence (push instants annotated).
  (f) Measured base angles (roll, pitch) during the push sequence (push instants annotated).}
  \label{fig:hw_mpc_debris_push}
\end{figure*}

\subsection{Experiment 3: Executing different gaits}
\label{subsec:hw_gaits}
We next evaluate whether the proposed RK-MPC framework generalizes across gait schedules beyond those used during model identification. Note that the residual Koopman predictor is trained using data collected under a trot gait, yet the MPC formulation in \eqref{eq:reskoop_mpc_ocp} depends on the gait only through the contact schedule $\sigma_k$ (which selects the active stance feet and corresponding constraints). To test gait generalization, we execute both a trot gait and a crawl gait on hardware using the gait timing parameters reported in Table~\ref{tab:gait_params}.

Figure~\ref{fig:hw_mpc_gaits} summarizes representative segments for each gait. The left column shows snapshots of the robot executing the gait patterns, and the bottom row visualizes the corresponding contact schedule over time. The right columns show the measured per-foot ground reaction forces for each gait (stacked as $f_x$, $f_y$, and $f_z$), illustrating consistent force modulation aligned with the stance phases. Despite being trained only with trot data, RK-MPC produces stable locomotion under the crawl schedule, indicating that the learned residual correction transfers across contact timing patterns and that the controller can accommodate different gaits through the scheduled constraints and foothold plan.

\begin{figure*}
  \centering
  \includegraphics[width=\linewidth]{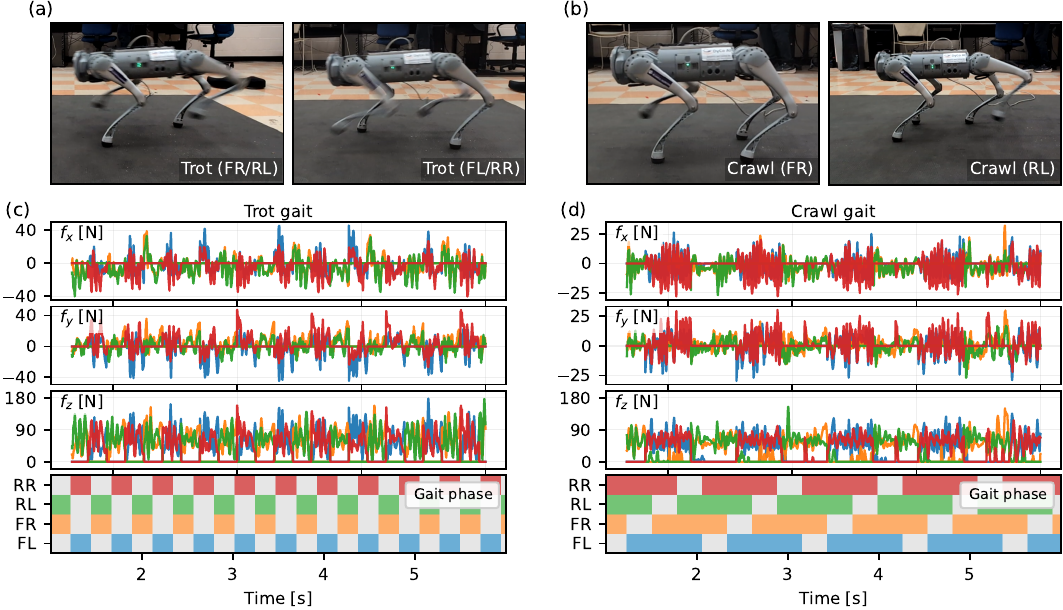}
  \caption{\textbf{Experiment 3: Gait generalization.}
  RK-MPC trained on trot data is executed on hardware under both trot and crawl gait schedules.
  representative snapshots (a-b). and measured per-foot ground reaction forces $f_x$, $f_y$, and $f_z$ for trot (c) and crawl (d).
  Bottom: corresponding gait phase/contact schedule for each leg (FR, FL, RR, RL), showing consistency between stance phases and force generation.}
  \label{fig:hw_mpc_gaits}
\end{figure*}

\subsection{Experiment 4: Generalization to off-road terrains}
\label{subsec:hw_offroad}
Finally, we evaluate the robustness of RK-MPC under unmodeled ground conditions by executing velocity-tracking trials on four representative outdoor terrains: grass, gravel, ice, and snow (Fig.~\ref{fig:hw_mpc_offroad}(a)--(d)). In all trials, the robot operates in a blind-locomotion setting without exteroceptive sensing and tracks commanded planar references specified by $(v_x^{\mathrm{cmd}},\,v_y^{\mathrm{cmd}},\,\omega_z^{\mathrm{cmd}})$.

On grass and gravel, we command primarily planar linear motion (no yaw-rate command) and observe stable locomotion with consistent tracking of $(v_x, v_y)$ over the duration of the trials (Fig.~\ref{fig:hw_mpc_offroad}(i)--(j)). The corresponding attitude phase portraits (Fig.~\ref{fig:hw_mpc_offroad}(e)--(f)) remain bounded and comparable to flat-ground behavior, with modest increases in roll/pitch excursions attributable to uneven contact and compliance.

On ice, the robot remains dynamically stable but begins to experience noticeable slip after several seconds. This behavior is consistent with the measured vertical contact forces saturating near the stance limit across legs (Fig.~\ref{fig:hw_mpc_offroad}(g)), indicating that the controller demands increased normal force to maintain traction while the available friction is insufficient. The contact schedule for this segment is shown in Fig.~\ref{fig:hw_mpc_offroad}(k).

On snow, we command a circular planar motion with a nonzero yaw-rate reference. Figure~\ref{fig:hw_mpc_offroad}(h) shows that the robot tracks $(v_x, v_y)$ smoothly, and Fig.~\ref{fig:hw_mpc_offroad}(l) shows close tracking of $\omega_z$ over the trial, demonstrating that RK-MPC can sustain coordinated turning motion on deformable terrain.

\begin{figure*}
  \centering
  \includegraphics[width=\linewidth]{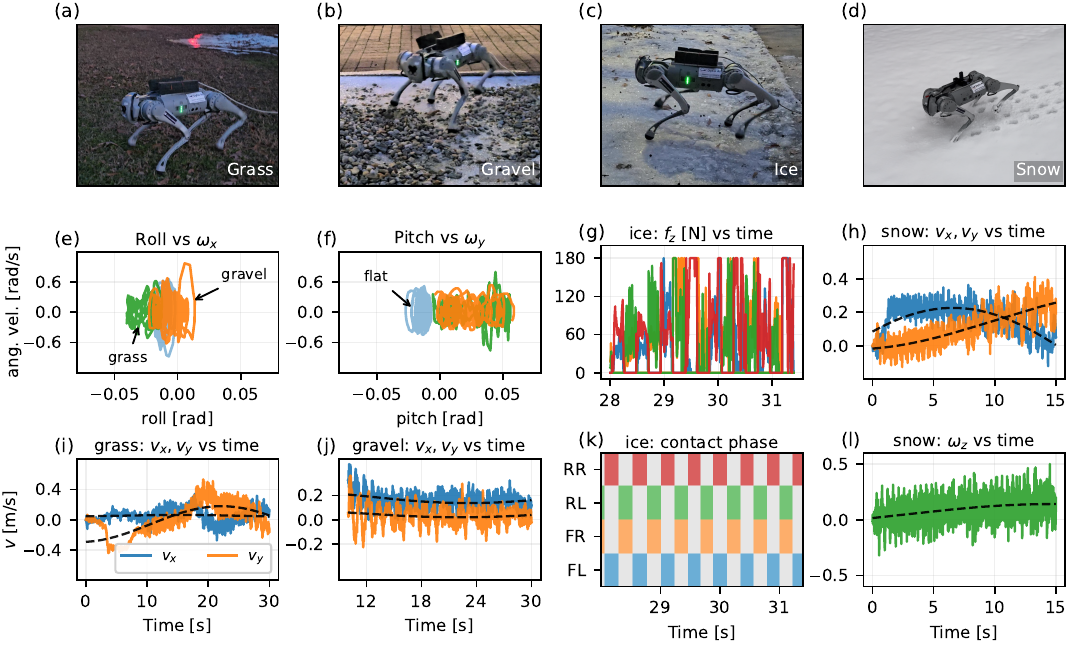}
  \caption{\textbf{Experiment 4: Off-road generalization.}
  (a)--(d) Representative snapshots of blind locomotion on grass, gravel, ice, and snow.
  (e)--(f) Attitude phase portraits (roll versus $\omega_x$ and pitch versus $\omega_y$) across terrains.
  (g) Ice: measured vertical ground-reaction forces $f_z$ approaching the stance limit during slip onset.
  (h) Snow: commanded (dashed) and measured (solid) planar velocities $(v_x, v_y)$ during a turning trial.
  (i)--(j) Grass and gravel: commanded (dashed) and measured (solid) $(v_x, v_y)$ time histories.
  (k) Ice contact schedule for the shown segment.
  (l) Snow: commanded (dashed) and measured (solid) yaw rate $\omega_z$.}
  \label{fig:hw_mpc_offroad}
\end{figure*}

\section{Discussion}
\label{sec:discussion}

The results show that directly learning a lifted linear predictor for the full
SRB state using a generic monomial EDMD dictionary can be unreliable for
contact-rich quadruped locomotion. Even when the one-step regression error is
reasonable, multi-step rollouts can become unstable, leading to rapidly growing
prediction error and divergence (Fig.~\ref{fig:residual_edmd_fit}(a) and
Table~\ref{tab:rmse_all_models}). This behavior is consistent with the fact that
high-degree polynomial lifts can amplify modeling and measurement errors, and
the resulting linear predictor does not inherently enforce physical structure or
boundedness. When embedded in a receding-horizon controller, such open-loop drift
corrupts the horizon predictions and degrades the quality of the optimized
contact forces, making monomial EDMD a poor drop-in replacement for a
physics-based template in MPC.

Using an $SE(3)$-inspired EDMD dictionary substantially improves prediction
accuracy relative to monomial EDMD and avoids catastrophic blow-up, yielding
lower RMSE in open-loop tests (Fig.~\ref{fig:residual_edmd_fit}(b,c) and
Table~\ref{tab:rmse_all_models}). However, this improvement comes with a distinct
failure mode in closed loop: the lift depends explicitly on the estimated
attitude through $\mathrm{vec}(R)$ and the rotation-aware features in
\eqref{eq:EDMD_SE3}. As a consequence, small attitude-estimation errors perturb
the lifted coordinates and compound over the MPC horizon, producing drift in the
internally propagated rotation variables and ultimately destabilizing SE3-KMPC in
long-horizon tasks (Fig.~\ref{fig:sim_mpc_somparison}(h)). This highlights a
practical trade-off: adding geometric structure can reduce model bias, but can
also increase sensitivity to state-estimation quality and the consistency of
$\hat R$ under repeated prediction.

The proposed RK-MPC avoids these issues by retaining a physics-based SRB template
inside the optimizer and learning only a compact residual correction applied to
the velocity channels. This preserves the scheduling and constraint structure of
standard convex SRB MPC (contact-set dependent constraints, planned moment arms,
and per-foot force decision variables), while using data to improve prediction
fidelity in regimes where the nominal template is imperfect. Across simulation
and hardware experiments, RK-MPC consistently provides reliable tracking and
stable locomotion without the lifted-state divergence of monomial EDMD or the
rotation-drift sensitivity observed with SE(3)-based lifts.

On hardware, RK-MPC further demonstrates practical real-time viability: the full
stack runs onboard on an NVIDIA Jetson Xavier NX with the estimator and
controller executing at $500$~Hz while the MPC discretization uses $\Delta t=0.01$~s
(Sec.~\ref{sec:experiments}). Beyond flat-ground tracking (Sec.~\ref{subsec:hw_tracking}),
RK-MPC maintains stability under disturbances and changing ground conditions.
In particular, the off-road study in Experiment~4 (Sec.~\ref{subsec:hw_offroad})
directly evaluates performance under terrain and friction effects (grass, gravel,
ice, and snow), showing that RK-MPC sustains velocity tracking and stable walking
across diverse outdoor surfaces.

Two extensions follow naturally from this work. First, RK-MPC can be made
adaptive by updating the residual model online using new transitions collected
during operation, enabling specialization to changing payloads, friction, or
actuator characteristics. Such updates could be performed through incremental
least-squares refinement of $(A^{\mathrm{res}},B^{\mathrm{res}},C^{\mathrm{res}})$ or periodic refitting of a lightweight
residual layer under safety constraints. Second, because RK-MPC remains a convex
QP, it provides a natural foundation for robust MPC designs with guarantees.
For example, uncertainty sets on the learned residual could be propagated through
the linear predictor to derive tube-based or constraint-tightening schemes,
retaining real-time solvability while improving robustness certification.

\section{Conclusion}
\label{sec:conclusion}

This paper presented a residual Koopman modeling and control framework for
quadruped locomotion that augments a standard SRB-based convex MPC with a compact
data-driven correction. By learning a linear residual predictor in lifted
coordinates and applying the correction only in the physical velocity channels,
RK-MPC preserves the structure and real-time solvability of convex contact-force
MPC while improving prediction fidelity under contact variability. In
simulation and Unitree Go1 hardware experiments, RK-MPC achieves reliable planar
velocity tracking, maintains stability under disturbances, and generalizes to
different gait schedules and off-road terrains, all while running fully onboard
at high control rates. Overall, these results suggest that residual Koopman
models offer a practical middle ground between purely physics-based templates and
fully lifted EDMD predictors for contact-rich legged locomotion.

\bibliographystyle{ieeetr}  
\bibliography{references}

\end{document}